\newlength{\dhatheight}
\newcommand{\Px}{\mathcal{P}}
\newcommand{\s}{\mathfrak{s}}
\newcommand{\SC}{\mathcal{M}}
\newcommand{\vc}{{\rm d}}
\newcommand{\covcvague}{{\rm k}}
\newcommand{\covc}{{\rm k}_w}
\newcommand{\covcmod}{{\rm k}_o}
\newcommand{\covcproj}{{\rm k}_p}
\newcommand{\Log}{{\rm Log}}
\newcommand{\Alg}{\mathbb{A}}
\renewcommand{\epsilon}{\varepsilon}
\newcommand{\eps}{\varepsilon}
\newcommand{\X}{\mathcal X}
\newcommand{\Y}{\mathcal Y}
\newcommand{\alg}{\mathbb{A}}
\renewcommand{\H}{\mathbb C}
\newcommand{\Z}{\mathcal{Z}}
\newcommand{\target}{f^{\star}}
\newcommand{\er}{{\rm er}}
\newcommand{\ER}{{\rm ER}}
\DeclareSymbolFont{bbold}{U}{bbold}{m}{n}
\DeclareSymbolFontAlphabet{\mathbbold}{bbold}
\newcommand{\ind}{\mathbbold{1}}
\newcommand{\C}{\mathbb{C}}
\newcommand{\A}{\mathcal{A}}
\newcommand{\sign}{{\rm sign}}
\renewcommand{\P}{{\rm Pr}}
\newcommand{\nats}{\mathbb{N}}
\newcommand{\reals}{\mathbb{R}}
\newcommand{\E}{\mathbb{E}}
\newcommand{\Var}{{\rm Var}}
\newcommand{\ignore}[1]{}
\newcommand{\oldstuff}[1]{}
\newcommand{\new}[1]{}
\newcommand{\sh}[1]{}
\newsavebox{\savepar}
\newenvironment{bigboxit}{\begin{center}\begin{lrbox}{\savepar}
\begin{minipage}[h]{5.2in}
\normalfont
\begin{flushleft}}
{\end{flushleft}\end{minipage}\end{lrbox}\fbox{\usebox{\savepar}}
\end{center}}
\newcommand{\vast}{\bBigg@{3}}
\newcommand{\Vast}{\bBigg@{4}}
\renewenvironment{proof}[1][]{\par\noindent{\bf Proof #1\ }}{\hfill\BlackBox\\[2mm]}
\title{Proper Learning, Helly Number, and an Optimal SVM Bound}
\begin{document}
\maketitle

\begin{abstract}
The classical PAC sample complexity bounds are stated for any Empirical Risk Minimizer (ERM) 
    and contain an extra multiplicative logarithmic factor $\log \frac{1}{\eps}$ 
    which is known to be necessary for ERM in general. 
    It has been recently shown by \citet*{hanneke:16a} that 
    the optimal sample complexity of PAC learning for any VC class $\H$ does not include this log factor and 
    is achieved by a particular \emph{improper learning algorithm}, 
    which outputs a specific majority-vote of hypotheses in $\H$.
    This leaves the question of when this bound can be achieved by \emph{proper learning algorithms},
    which are restricted to always output a hypothesis from $\H$.

In this paper we aim to characterize the classes for which the optimal sample complexity can be achieved by a proper learning algorithm.
    We identify that these classes can be characterized by the \emph{dual Helly number},
    which is a combinatorial parameter that arises in discrete geometry and abstract convexity.
    In particular, under general conditions on $\H$, 
    we show that the dual Helly number is bounded 
    if and only if there is a proper learner that obtains the optimal dependence on $\epsilon$.
    
    As further implications of our techniques we resolve a long-standing open problem posed by \citet*{vapnik:74} on the performance of the \emph{Support Vector Machine} in $\mathbb{R}^n$ by proving that the sample complexity of SVM in the realizable case is 
    \[
    \Theta\!\left( \frac{n}{\eps} + \frac{1}{\eps}\log\frac{1}{\delta} \right).
    \]
    This gives the first optimal PAC bound for Halfspaces in $\mathbb{R}^n$ achieved by a proper learning algorithm, 
    and moreover is computationally efficient.
\end{abstract}

\begin{keywords}
Statistical Learning Theory, PAC Learning, Sample Complexity, Proper Learning, SVM.
\end{keywords}

\section{Introduction}

In the literature on the theory of PAC learning, 
there has been much work discussing the important 
distinction between \emph{proper} vs \emph{improper} 
learning algorithms, where a proper learner is required 
to output a hypothesis from the concept class being learned, 
while an improper learner may output any classifier, not necessarily 
in the class.  Most of this literature has focused on the \emph{computational} 
separations between proper and improper learning (see e.g.,~\citealp*{kearns:94}).
However, it is also interesting to consider the effect on \emph{sample complexity} 
of proper vs improper learning.
While the optimal sample complexity of PAC learning was recently 
resolved by \citet*{hanneke:16a}, the proposed learning algorithm 
is \emph{improper}: constructing its classifier based on a majority 
vote of well-chosen classifiers from the concept class.
Furthermore, it follows from arguments analogous to the work of \citet*{daniely:14}
that the optimal sample complexity of PAC learning is sometimes \emph{not achievable} 
by proper learners (see also our Theorem~\ref{thm:sometimes-tight}).
While the question of characterizing the best sample complexity achievable by proper learners  
has been resolved for several special-case concept classes \citep*[e.g.,][]{auer:07,darnstadt:15,hanneke:16b},
the best known \emph{general} results on the sample complexity of proper learning
in the prior literature are still only the results that hold for \emph{all} empirical risk minimization (ERM)  
algorithms \citep*{vapnik:74,blumer:89,hanneke:16b,zhivotovskiy2018localization}. 
However, it is well known that there are many classes where specific proper learners
can achieve better sample complexities (by a log factor) than the worst ERM learner \citep*{auer:07}.
Thus, it is important to go beyond the general analysis of ERM learners 
if we are to truly understand the best sample complexity achievable by 
proper learners, in a general analysis.

In the present work, we aim to provide such a general analysis of the 
sample complexity of proper learning, applicable to \emph{every} concept class,
by identifying the relevant combinatorial complexity measures of the class.
We specifically find that a quantity called the \emph{dual Helly number}
(previously proposed by \citealp*{kane19a} under the name \emph{coVC dimension})
is of critical importance.  
In particular, when the dual Helly number is finite, the logarithmic factor 
in the well-known sample complexity bounds for ERM \citep*{vapnik:74} 
may be replaced by a bounded quantity.
The proper learning algorithm that the upper bound holds for is a modification 
of the optimal PAC learner of \citet*{hanneke:16a}, but modified 
in several steps so that it remains proper.  

As a further implication of the techniques we develop, we find
that in the case of learning Halfspaces in $\reals^n$, the well-known \emph{support vector machine} (SVM) 
learning algorithm achieves the optimal sample complexity 
$\Theta\!\left( \frac{n}{\eps} + \frac{1}{\eps}\log\frac{1}{\delta} \right)$.
This resolves a question that appeared in the seminal work of \citet*{vapnik:74}.
Moreover, this also provides the first proof that Halfspaces are properly learnable with the optimal sample complexity: that is, sample complexity of the form $\frac{n}{\eps}+\frac{1}{\eps}\log\frac{1}{\delta}$.
As a further implication, we find that Maximum classes  
of any given VC dimension $\vc$ are also properly learnable with optimal 
sample complexity $\Theta\!\left( \frac{\vc}{\eps} + \frac{1}{\eps}\log\frac{1}{\delta} \right)$.

\begin{figure}
\begin{center}
\renewcommand{\arraystretch}{1.2}
\begin{tabular}{|c|c|c| }
\hline
\multicolumn{3}{|c|}{Bounds on the sample complexity of PAC learning} \\
\hline
\multirow{2}*{Improper Learning} & \multirow{2}*{$\Theta\!\left(\frac{\vc}{\eps} + \frac{1}{\eps}\log \frac{1}{\delta} \right)$} & \citealp*{hanneke:16a}\\
 & & \citealp{ehrenfeucht:89}\\
\hline 
\multirow{2}*{Any ERM} & $O\!\left(\frac{\vc}{\eps} \log(\frac{1}{\eps} \land \frac{\s}{\vc}) + \frac{1}{\eps}\log \frac{1}{\delta}\right)$ & \citealp*{hanneke:16b}\\
 & $\Omega\!\left(\frac{\vc}{\eps} + \frac{1}{\eps}\log(\frac{1}{\eps} \land \s) + \frac{1}{\eps}\log \frac{1}{\delta}\right)$ & \citealp*{vapnik:74} \\
\hline
\multirow{2}{*}{Proper Learning} & $O\!\left(\frac{\vc \covcvague^2}{\eps} \log(\covcvague) + \frac{\covcvague^2}{\eps}\log \frac{1}{\delta}\right)$ & \multirow{2}*{\color{blue} New results in this work.} \\
& $\Omega\!\left(\frac{\vc}{\eps} + \frac{1}{\eps}\log(\covcvague) + \frac{1}{\eps}\log \frac{1}{\delta}\right)$ & \\
\hline
\multirow{1}{*}{SVM / Halfspaces in $\reals^n$} & $\Theta\!\left( \frac{n}{\eps} + \frac{1}{\eps}\log\frac{1}{\delta} \right)$ & {\color{blue} New result in this work.}\\ 
\hline
\multirow{1}{*}{Maximum Class (Proper)} & $\Theta\!\left( \frac{\vc}{\eps} + \frac{1}{\eps}\log\frac{1}{\delta} \right)$ & {\color{blue} New result in this work.}\\ 
\hline
\end{tabular}
\end{center}
\caption{Summary of results on the sample complexity of $(\epsilon,\delta)$-PAC learning, 
along with our new results. $\vc$ denotes the \emph{VC dimension} \citep*{vapnik:71}, 
$\s$ the \emph{star number} \citep*{hanneke:15b}, 
and $\covcvague$ the \emph{dual Helly number} \citep*{kane19a} discussed in this article. 
Specific definitions, conditions, and ranges of parameters for which the results hold 
are discussed below. 
}
\label{fig:table}
\end{figure}

The known results on sample complexity are summarized in Figure~\ref{fig:table} 
along with a (rough) statement of our new results.

\paragraph{Notation.} To begin the formal discussion, we introduce some basic notation.
Fix a space $\X$ equipped with a $\sigma$-algebra specifying the measurable subsets.
Let $\Y = \{-1,1\}$ denote the \emph{label space}.  A \emph{classifier} is any 
measurable function $h : \X \to \Y$, and a \emph{concept class} is any set $\H$ of classifiers.
To focus on nontrivial cases here, we will always suppose $|\H| \geq 3$.
A \emph{learning algorithm} $\A$ maps any sequence (data set) $\{(x_1,y_1),\ldots,(x_n,y_n)\}$ in $\X \times \Y$, 
of any length $n$, to a classifier $\hat{h}_n$; the map $\A$ may include randomization.
A learning algorithm $\A$ is called \emph{proper} (for~$\H$) 
if $\hat{h}_n$ is always an element of $\H$, for all possible data sets.
Otherwise $\A$ is called \emph{improper}.

In the PAC learning problem, there is a \emph{data distribution} $\Px$ (a probability measure on $\X$), 
and a \emph{target concept} $\target \in \H$.  For any classifier $h$, define 
$\er_{\Px}(h;\target) = \Px( x : h(x) \neq \target(x) )$.
There is an i.i.d.\ sequence of $\Px$-distributed 
random variables $X_1(\Px),X_2(\Px),\ldots$.  
When $\Px$ and $\target$ are clear from the context, we simply write $\er(h)$ and $X_1,X_2,\ldots$.
Define $a \land b = \min\{a,b\}$
for $a,b \in \reals$.
Generally, for any sequence $x_1,x_2,\ldots$ and any $f : \X \to \Y$, 
we use the notation $1:n = \{1,\ldots,n\}$, 
$x_{1:n} = \{x_1,\ldots,x_n\}$, and $(x_{1:n},f(x_{1:n})) = \{(x_1,f(x_1)),\ldots,(x_n,f(x_n))\}$.
The \emph{sample complexity}, the central quantity of study in this work, is defined as follows.

\begin{definition}
\label{def:sample-complexity}
For any $\epsilon,\delta \in (0,1)$, the \emph{sample complexity} of $(\epsilon,\delta)$-PAC learning, 
denoted
$\SC(\epsilon,\delta)$, is defined as the smallest $n \in \nats$ for which there exists 
a learning algorithm $\A$ such that, for every data distribution $\Px$ and every $\target \in \H$, 
the (random) classifier $\hat{h}_n = \A((X_{1:n},\target(X_{1:n})))$ satisfies 
\begin{equation*}
\P\!\left( \er(\hat{h}_n) \leq \epsilon \right) \geq 1-\delta.
\end{equation*}
The sample complexity of $(\epsilon,\delta)$-PAC \emph{proper} learning,
denoted by $\SC_{{\rm prop}}(\epsilon,\delta)$, is defined identically, 
except that the learning algorithm $\A$ is required to be \emph{proper}: 
it always outputs an element of $\H$.
\end{definition}

A fundamental quantity in characterizing the sample complexity is the \emph{VC dimension} \citep*{vapnik:71}.
We say $\H$ \emph{shatters} a sequence of points $x_{1:n} \in \X^n$ if 
$\forall y_{1:n} \in \Y^n$, $\exists h \in \H$ with $h(x_{1:n}) = y_{1:n}$.
The VC dimension of $\H$, denoted by $\vc$, is the largest $n \in \nats$ for which 
there exists a sequence $x_{1:n}$ shattered by $\H$; 
otherwise if no such largest $n$ exists, define $\vc = \infty$.

The sample complexity of (unrestricted) PAC learning was recently proven by \citet*{hanneke:16a} to satisfy 
$\SC(\epsilon,\delta) = \Theta\!\left( \frac{\vc}{\epsilon} + \frac{1}{\epsilon}\log\!\left(\frac{1}{\delta}\right)\right)$, 
resolving a gap between the previously-known lower bound of this form 
(from \citealp*{vapnik:74, ehrenfeucht:89})
and previous suboptimal upper bounds \citep*{vapnik:74,blumer:89,haussler:94,simon:15}.
However, it is interesting to note that the optimal learning algorithm proposed by \citet*{hanneke:16a} 
is \emph{improper}.  

Most of the work on the sample complexity of proper learning is based on the 
fact (due to \citealp*{vapnik:74}, \citealp*{blumer:89}) that any learning algorithm that outputs any $\hat{h} \in \H$ 
making no mistakes on the training data (called \emph{empirical risk minimization}, or ERM) is 
guaranteed to achieve a sample complexity 
$O\!\left( \frac{\vc}{\epsilon} \log\!\left(\frac{1}{\epsilon}\right)+\frac{1}{\epsilon}\log\!\left(\frac{1}{\delta}\right)\right)$.
This bound has been refined in some special cases \citep*{hanneke:16b,zhivotovskiy2018localization}, 
but it is known that it cannot generally be improved while still holding for all ERM learners 
\citep*{blumer1989learning,haussler:94,auer:07}.
On the other hand, for some special types of concept classes,
it was observed that $\SC_{{\rm prop}}(\epsilon,\delta) = 
\Theta\!\left( \frac{\vc}{\epsilon} + \frac{1}{\epsilon}\log\!\left(\frac{1}{\delta}\right)\right)$: 
that is, proper learning is sometimes optimal.
For instance, this was shown for any \emph{intersection-closed} concept class, 
where the optimal sample complexity is achieved by a proper learner known as the \emph{Closure} algorithm 
\citep*{auer:07,darnstadt:15,hanneke:16b}.
For the class of Halfspaces on $\reals^n$,
\citet*{vapnik:74} found that the support vector machine (SVM) classifier
(which is a proper learner)
achieves the optimal dependence on $\vc$ and $\epsilon$ 
for obtaining expected error $\epsilon$,
and they essentially asked the question of whether the exact optimal form
$\Theta\!\left(\frac{\vc}{\epsilon}+\frac{1}{\epsilon}\log\!\left(\frac{1}{\delta}\right)\right)$ 
for PAC learning is achieved by SVM. 
This question has remained open since then, with a number of 
works investigating the question 
\citep*[e.g.,][]{blumer1989learning, balcan:13,zhivotovskiy2017,hanneke:19b,long2020}.
We answer this question affirmatively, finding that indeed the SVM classifier 
achieves the optimal sample complexity for learning Halfspaces.

Related results are known for the multi-class setting (where we may have $|\Y| > 2$, or even infinite $\Y$).
In this case, \citet*{daniely2015multiclass} showed that different ERMs may have strikingly different sample complexities, 
and \citet*{daniely:14} showed that there exist classes (with $|\Y|=\infty$) that are learnable but \emph{not} properly learnable 
(i.e., $\SC(\epsilon,\delta)<\infty$ but $\SC_{{\rm prop}}(\epsilon,\delta)=\infty$).  A similar separation between 
proper and improper learnability was also recently shown by \citet*{montasser:19} for the problem of 
learning with adversarial robustness guarantees.
Of course, these kinds of striking separations cannot happen in the binary classification setting ($|\Y|=2$) 
studied here, since the 
aforementioned result of \citet*{vapnik:74} shows that ERM learners obtain sample complexities 
that are at most suboptimal by a factor $O(\log \frac{1}{\epsilon})$.  However, the argument used in the 
proofs of \citet*{daniely:14} and \citet*{montasser:19} can be adapted to show that this logarithmic factor is 
sometimes \emph{necessary}: that is, that there exist classes of any given VC dimension $\vc$ 
for which $\SC_{{\rm prop}}(\epsilon,\delta) = \Omega( \frac{\vc}{\epsilon} \log \frac{1}{\epsilon} + \frac{1}{\epsilon} \log \frac{1}{\delta} )$.\footnote{This result is unpublished, and essentially folklore, discovered independently 
by several different people familiar with the argument of \citet*{daniely:14}.}

One of the aims of this paper is to connect these scattered observations and explain the properties of the class $\H$ 
responsible for the optimality or sub-optimality of proper PAC learners.

\paragraph{Structure of the paper and main contributions}
\begin{itemize}
\item Section \ref{sec:definitions} contains the definition of the dual Helly number and two variants of it: 
    the \emph{hollow star} number and the \emph{projection number}. 
    We show that these three parameters coincide whenever they are finite 
    and present some (infinite) classes which demonstrates subtle cases in which these parameters differ.
    \item Section \ref{sec:upper-bound} contains an upper bound for proper learning classes with bounded dual Helly number.
    Specifically, we show that if the \emph{projection number} is bounded, there is a particular ERM having a better 
    sample complexity than arbitrary ERM. We also provide a proper learning algorithm achieving an even more-improved sample complexity, but this proper  
    algorithm is not necessarily an ERM.
    \item Section \ref{sec:lower-bound} contains a lower bound on the sample complexity of proper algorithms when the 
    hollow star number is large.
    \item Section \ref{sec:SVM} presents a new upper bound for \emph{stable compression schemes}: 
    compression schemes whose choice of compression set 
    is unaffected by removing points not in the 
    compression set.
    In particular, as SVM can be viewed as a stable 
    compression scheme, our result implies that SVM requires only $O\!\left(\frac{n}{\eps} + \frac{1}{\eps}\log \frac{1}{\delta} \right)$ 
    examples in order to $(\epsilon,\delta)$-PAC learn $n$-dimensional halfspaces.
    This resolves a long-standing open problem from  \citet*{vapnik:74}. 
    As a further implication of our general result for 
    stable compression schemes, we also find that 
    all Maximum classes are 
    properly learnable with optimal sample 
    complexity.

\end{itemize}

\section{The dual Helly number}
\label{sec:definitions}

Helly's Theorem is a fundamental result about convex sets~\citep{Helly1923}.
    It asserts that for any sequence of convex sets $C_1\ldots C_m\subseteq \mathbb{R}^n$ 
    such that $\cap_{i=1}^{m} C_i = \emptyset$ there is a  subsequence $C_{i_1}\ldots C_{i_k}$,
    with $k\leq n+1$ such that $\cap_{j=1}^{k} C_{i_j} = \emptyset$.
    This notion has been studied more abstractly in various settings (see, e.g.,~\citealp*{levi1951helly,danzer1963helly});
    it is defined in an abstract manner as follows:
    let $\mathcal{F}$ be a family of subsets over a domain $\X$.
    The {\it Helly number} of $\mathcal{F}$ is the minimum integer ${\rm k}$
    such that whenever $\mathcal{C} \subseteq \mathcal{F}$ 
    is a collection of sets 
    whose intersection is empty then there is a 
    subset $\mathcal{C}^{\prime} \subseteq \mathcal{C}$
    of size at most ${\rm k}$ whose intersection is empty.
    That is, the empty intersection of the entire collection $\mathcal{C}$ is witnessed
    by a subset of size at most ${\rm k}$.

\paragraph{The Dual Helly Number of a Class.}
We now adapt the Helly number (in a dual manner) to our context.
The obtained parameter\footnote{We note that an equivalent parameter was introduced by the name ``coVC dimension'' by \citet*{kane19a},
where it was used to characterize proper learning in a distributed setting.} plays a central role in our characterization of the proper sample complexity. For any $S \subseteq \X \times \Y$, 
define $\H[S] = \{ h \in \H : \forall (x,y) \in S, h(x)=y \}$.

\begin{definition}[The dual Helly number]
\label{def:witness}
Define the dual Helly number of $\H$, denoted by $\covc$, 
as the smallest integer $k$ such that, for any $S \subseteq \X \times \Y$ 
such that $\H[S] = \emptyset$, 
there is a set $W \subseteq S$ with $|W| \leq k$ such that
$\H[W] = \emptyset$.
That is, for any unrealizable set of points, there is an unrealizable subset of size at most $k$.
If no such $k$ exists, we define $\covc = \infty$.
\end{definition}
Observe that the dual Helly number is precisely the Helly number of the following family $\mathcal{F}$ 
which is defined on the dual space:
for each example $(x,y)\in \X\times \Y$ let $\H_{(x,y)}:= \{h\in \H : h(x)=y\}$ 
and let $\mathcal{F}:=\{\H_{(x,y)} : (x,y)\in\X\times \Y\}$.
This definition is also related to the notion of a \emph{teaching set} \citep*{goldman:95}: 
recall that $W\subseteq X$ is a teaching set for $h$ with respect to $\H$ if there exists no $h'\in \H\setminus\{h\}$ 
which agrees with $h$ on $W$. 
In particular, observe that any $h\notin \H$ has a teaching set with respect to $\H$ of size at most $\covc(\H)$.

We proceed with the second definition. 
We will need the following notation:
two sequences (or samples) $(x_1,y_1),\ldots,(x_k,y_k)\in \X\times \Y$ and 
$(x_1',y_1'),\ldots, (x_k',y_k')\in \X\times \Y$ are said to be \emph{neighbors}
if $x_i=x_i'$ for all $i \in 1:k$ 
and there exists exactly one $j \in 1:k$
such that $y_j\neq y_j'$.

\begin{definition}[The hollow star number]
\label{def:hollowstar}
Define the \emph{hollow star number} of $\H$, denoted by  $\covcmod$,
as the largest integer $k$ such that there is a sequence $S=((x_1,y_1),\dots (x_k,y_k))\in (\X\times\Y)^k$
which is \underline{not} realizable by $\H$ (i.e.\ $\H[S]=\emptyset$), 
however every sequence $S'$ which is a neighbor of $S$ is realizable by $\H$ (i.e.,\ $\H[S']\neq\emptyset$).  
If no such largest $k$ exists, define $\covcmod = \infty$.
\end{definition}

We refer to any unrealizable sequence $S \in (\X \times \Y)^*$, 
such that every neighbor $S'$ of $S$ is realizable, 
as a \emph{hollow star set}.  Thus, $\covcmod$ is the 
size of the largest finite hollow star set, or $\infty$ if 
there exist hollow star sets of unbounded finite sizes.  
Note that (equivalently) any hollow star set 
$S = \{(x_1,y_1),\ldots,(x_k,y_k)\}$ 
has the properties that $\H[S]=\emptyset$ 
and $\forall i \leq k$, $\exists h_i \in \H$ 
s.t.\ $\{ j : h_i(x_j) \neq y_j \} = \{ i \}$.

The name \emph{hollow star number} is chosen to stress the similarity with the \emph{star number} defined 
by \citet*{hanneke:15b}.
The \emph{star number} $\s$ is the maximum size of a \underline{realizable} sequence $S$ such that every sequence $S'$ which is a neighbor of $S$ is also realizable.  Thus, the definitions of star number 
and hollow star number differ 
\emph{only} in whether $S$ is required to be 
realizable or unrealizable.

The star number $\s$ was defined by \citet*{hanneke:15b} to characterize the PAC sample complexity of 
active learning, 
and was shown by \citet*{hanneke:16b} to also 
characterize the general rate of convergence of ERM 
(see Section~\ref{sec:star-ERM} below).
Interestingly, it can be shown \citep*{hanneke:15b} that the star number upper bounds 
the size of the \emph{teaching set} of any classifier.  
It also has many other 
connections to various quantities arising in the 
learning theory literature 
\citep*{hanneke:15b,hanneke:16b}.
From the definitions, we immediately have the simple inequality:
\begin{equation*}
    \covcmod - 1 \le \s.
\end{equation*}
However, as we discuss below, while the classes $\H$ 
having $\s < \infty$ are very limited, 
classes with $\covcmod < \infty$ are 
far more common (e.g., Halfspaces). Thus, 
this one small difference in the definition, 
requiring the star's \emph{center} $S$ to be unrealizable, 
significantly impacts the value of the quantity.\\

Our final definition is slightly more involved but it will play a key role in our upper bound. 
For a finite (multiset) $\H^{\prime} \subseteq \H$ let $\rm{Majority}(\H^{\prime}):\X\to\{0,1,?\}$ denote 
the majority-vote classifier defined by:
\[
\rm{Majority}(\H^{\prime})(x) =
\begin{cases}
0 & \bigl\lvert\{c\in \H^{\prime} : c(x) = 0\}\bigr\rvert > \frac{\lvert \H^{\prime}\rvert}{2},\\
1 & \bigl\lvert\{c\in \H^{\prime} : c(x) = 1\}\bigr\rvert > \frac{\lvert \H^{\prime}\rvert}{2},\\
? &\text{else.}
\end{cases}
\]
For $l \geq 2$, define the set $\mathcal{X}_{\H^{\prime}, l}\subseteq \X$  of all the points $x$ on which less than $\frac{1}{l}$-fraction of all classifiers in $\H^{\prime}$ disagree with the majority. That is,
\begin{equation}
\label{eq:majoritylabel}
\mathcal{X}_{\H^{\prime}, l} = \left\{x \in \X: \sum_{h \in \H^{\prime}} \ind[ h(x) \neq h_{{\rm maj}}(x) ] < \frac{|\H^{\prime}|}{l}\right\},
\end{equation}
where $h_{{\rm maj}} = \rm{Majority}(\H^{\prime})$.

\begin{definition}[The projection number]
\label{def:projection}
Define the \emph{projection number} of $\H$, denoted by $\covcproj$, as the 
smallest integer $k \geq 2$ such that, for any finite 
multiset 
$\H^{\prime} \subseteq \H$ there exists $h \in \H$ that agrees with $\rm{Majority}(\H^{\prime})$ on the entire set $\mathcal{X}_{\H^{\prime}, k}$.
If no such integer $k$ exists, define $\covcproj = \infty$.
\end{definition}

\noindent This definition allows us to ``project'' the majority vote of any classifiers in $\H$ to the class $\H$.  Define
\[
{\rm{Proj}_{\H}}(\H^{\prime})\; \textrm{is any element in}\ \left\{h \in \H: h(x) = {\rm Majority}(\H^\prime),\; \textrm{for all}\; x \in \mathcal{X}_{\H^{\prime}, \covcproj}\right\}.
\]
The set used in this definition is always non-empty (by 
definition of $\covcproj$) when $\covcproj < \infty$. 

Following \citet*{kane19a}, we say a class $\H$ is ``closed'' 
if every $S \subseteq \X \times \Y$ with $\H[S]=\emptyset$ 
has a finite subset $S' \subseteq S$ with $\H[S']=\emptyset$.
The following lemma connects these quantities.  Its proof is included in Appendix~\ref{sec:omittedexamples}.
\begin{lemma}
\label{lem:all-the-hellys}
\begin{itemize}
\item $\covcmod \leq \covcproj \leq \covc$.
\item If $\covc < \infty$ or $\H$ is closed, 
then $\covcmod = \covcproj = \covc$.
\end{itemize}
\end{lemma}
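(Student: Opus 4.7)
The plan is to verify the chain $\covcmod \le \covcproj \le \covc$ directly from the definitions, and then to establish the reverse bound $\covc \le \covcmod$ under the closure hypothesis so that all three quantities coincide.

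For the first inequality I will turn any hollow star set $S=\{(x_1,y_1),\ldots,(x_k,y_k)\}$ of size $k \ge 3$ into an explicit obstruction to projection at level $k-1$. By definition of a hollow star, for each $i$ there is a witness $h_i \in \H$ with $h_i(x_j)=y_j$ for $j \neq i$ and $h_i(x_i)=-y_i$. Taking the multiset $\H^{\prime}=\{h_1,\ldots,h_k\}$, exactly one voter disagrees with the strict majority $y_i$ at each $x_i$, so the disagreement fraction $1/k$ is strictly less than $1/(k-1)$ and hence $x_i \in \mathcal{X}_{\H^{\prime},k-1}$ for every $i$. Any projector matching the majority on $\mathcal{X}_{\H^{\prime},k-1}$ would then realize $S$, which is impossible; so projection fails at level $k-1$ and $\covcproj \ge k$. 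The degenerate cases $k \le 2$ are absorbed by the default lower bound $\covcproj \ge 2$ built into the definition.

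For the second inequality I will fix any finite multiset $\H^{\prime}\subseteq \H$ with majority vote $h_{\rm maj}$ and consider $S_0=\{(x,h_{\rm maj}(x)) : x \in \mathcal{X}_{\H^{\prime},\covc}\}$. If $\H[S_0]\neq\emptyset$ I obtain the required projector directly, so I may suppose $\H[S_0]=\emptyset$. By definition of $\covc$ there is a subset $W\subseteq S_0$ with $|W|\le \covc$ and $\H[W]=\emptyset$. At each point of $W$ strictly fewer than $|\H^{\prime}|/\covc$ voters disagree with the majority, so a union bound over the at most $\covc$ points of $W$ yields strictly fewer than $|\H^{\prime}|$ voters disagreeing anywhere on $W$. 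Consequently some $g \in \H^{\prime} \subseteq \H$ agrees with the majority on all of $W$, contradicting $\H[W]=\emptyset$.

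Finally, for $\covc \le \covcmod$ under the closure hypothesis, I first note that finiteness of $\covc$ already implies closedness, so I can work with the closedness assumption alone. Given any unrealizable $S$, I pick a finite unrealizable $S^{\prime}\subseteq S$ guaranteed by closedness and shrink it to a minimally unrealizable $W \subseteq S^{\prime}$. The key observation is that every minimally unrealizable set is automatically a hollow star set: for each $(x_i,y_i)\in W$, the realizer $h_i$ of $W\setminus\{(x_i,y_i)\}$ must satisfy $h_i(x_i)\neq y_i$ since $W$ itself is unrealizable, so the neighbor of $W$ obtained by flipping $y_i$ is realized by $h_i$. Hence $|W|\le \covcmod$, closing the chain $\covcmod = \covcproj = \covc$. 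The main obstacle is the union bound in the middle step, which crucially combines the \emph{strict} inequality in the definition of $\mathcal{X}_{\H^{\prime},\covc}$ with the at-most-$\covc$ size of $W$ to extract a voter realizing $W$; the rest is the clean observation that ``minimally unrealizable'' coincides with ``hollow star.''
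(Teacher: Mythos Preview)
Your proof is correct and follows essentially the same approach as the paper: the same witness multiset $\{h_1,\ldots,h_k\}$ for $\covcmod\le\covcproj$, the same union-bound-over-$W$ contradiction for $\covcproj\le\covc$, and the same observation that every minimally unrealizable finite set is a hollow star. Your organization of the second bullet is slightly cleaner than the paper's (you unify the two hypotheses by first noting $\covc<\infty\Rightarrow\H$ closed, whereas the paper treats them separately), but the underlying argument is identical; one trivial omission is that your $\covcproj\le\covc$ step silently assumes $\covc<\infty$ for the union bound, though the $\covc=\infty$ case is immediate.
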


\begin{remark}
Certainly if $\X$ or $\H$ is \emph{finite} then $\covc < \infty$, 
so that $\covcmod = \covcproj = \covc$.  However, in the general case, 
there are examples where each of these inequalities can be strict,  
due to one quantity being infinite and another finite.
We discuss such examples in Section~\ref{subsec:examples} below.
\end{remark}

\subsection{Some Examples}
\label{subsec:examples}

Let us now argue that many classes of interest have 
finite values for these complexity measures.
Where appropriate, details of the examples   
are provided in Appendix \ref{sec:omittedexamples}. 
We start with the Halfspaces.
\begin{example}
\label{ex:dualhellyforhalfspaces}
Let $\H$ be a class induced by halfspaces in $\mathbb{R}^{n}$.
Then $\covcmod = \covcproj = n + 2 = \vc+1$.
\end{example}

Another simple example is the case of \emph{intersection closed classes}. 
The class $\H$ is intersection closed if the set $\{\{x: h(x) = +1\} : h \in \H\} $ is closed under arbitrary intersections. 
Also, recall the notions of 
\emph{Maximum} and \emph{Extremal} classes \citep*[see e.g.,][]{floyd:95,lawrence:83,Bandelt06lopsided,moran2016labeled}.
A class $\H$ is a \emph{Maximum class} if, 
for every integer $m \geq \vc$ and every 
distinct $x_1,\ldots,x_m \in \X$, we have  
$|\{ (h(x_1),\ldots,h(x_m)) : h \in \H \}| = 
\sum_{i=0}^{\vc} \binom{m}{i}$ 
\citep*{floyd:95}.  
A class $\H$ is an \emph{Extremal class} if, 
for every sequence $x_1,\ldots,x_m$ shattered by $\H$, 
$x_1,\ldots,x_m$ is also shattered by set of classifiers 
in $\H$ that agree on all of $\X \setminus \{x_1,\ldots,x_m\}$.
It is known that every Maximum class is Extremal.

\begin{example}
\label{ex:maxclasses}
Let $\H$ be an intersection-closed class or an Extremal class (e.g., any Maximum class) with VC dimension $\vc$. Then
$\covcmod \leq \vc+1$.
\end{example}

In particular (due to Lemma~\ref{lem:all-the-hellys}), this means that any \emph{closed} class that is 
intersection-closed, 
Maximum, or Extremal will have $\covcmod = \covcproj = \covc \leq \vc+1$.

As mentioned, all of the inequalities in Lemma~\ref{lem:all-the-hellys} can be strict, 
due to the larger quantity being infinite while the smaller one is finite.
We now discuss this issue, and how it relates to whether a class is closed
(in terms of the definition of ``closed'' above, borrowed from \citealp*{kane19a}).
In particular, in all these examples, merely adding the limit cases into the class suffices 
to bring the complexity measures into agreement.

We begin with a simple example where $\covcmod = 2$ but $\covcproj = \infty$.

\begin{example}
\label{ex:singletons}
Consider $\X=\nats$ and $\H = \{ 2\ind_{\{t\}}-1 : t \in \X \}$ the class of \emph{singletons}.
The only finite hollow star sets are of size $2$: namely, sets $\{(x,1),(x',1)\}$ and $\{(x,1),(x,-1)\}$.
Thus, $\covcmod = 2$.  However, for any finite set $\H^\prime \subset \H$ of size at least $3$, 
${\rm Majority}(\H^\prime)$ is $-1$ everywhere on $\X$, but any $\ell < |\H^\prime|$ 
has $\X_{\H^\prime,\ell} = \X$, so that we must have $\covcproj > \ell$; thus, $\covcproj = \infty$.

We note that the constant function that is $-1$ everywhere is a pointwise limit of 
functions in $\H$.  Furthermore, by adding just this one additional function, 
the modified class $\H \cup \{x \mapsto -1\}$ has $\covc = 2$ (and hence $\covcproj = \covcmod = 2$ as well), 
and furthermore is closed (in the above sense).
\end{example}

Next we describe a simple example where $\covcproj = 2$ but $\covc = \infty$.

\begin{example}
\label{ex:thresholds}
Consider $\X = [0,\infty)$ and $\H = \{ 2 \ind_{[t,\infty)} - 1 : t \in [0,\infty) \}$
the class of \emph{threshold} functions.
For any finite $\H^\prime \subset \H$, the majority vote classifier is simply a median threshold from $\H^\prime$, 
and hence there is always an $h \in \H$ that agrees with ${\rm Majority}(\H^\prime)$ on all of $\X$.
Therefore, $\covcproj = 2$, the smallest possible value of $\covcproj$.
On the other hand, the set $S = \{ (x,-1) : x \in \X \}$ is unrealizable by $\H$, 
but there is no finite set witnessing this fact, and therefore $\covc = \infty$.

As in the previous example, we may note that the constant function that is $-1$ everywhere 
is a pointwise limit of functions in $\H$, and that by adding just this one additional 
function, the class $\H \cup \{x \mapsto -1\}$ has $\covc = 2$ (as any unrealizable set 
must contain an $(x,1)$ and $(x',-1)$ with $x \leq x'$), and hence $\covc=\covcproj=\covcmod$; 
furthermore, this modified class is also closed (in the sense described above).
\end{example}

These distinctions can occur in more extreme forms as well, as the following example illustrates.

\begin{example}
\label{ex:intervals}
Consider $\X = \reals$ and $\H = \{ 2 \ind_{[a,b]} - 1 : -\infty < a \leq b < \infty \}$
is the class of closed bounded intervals.
This class is intersection-closed (and Maximum) and has $\vc=2$.  
Furthermore, one can verify that $\covcmod = 3$.  However, 
the all-constant sets $S = \{ (x,y) : x \in \X \}$ (for either $y \in \{-1,1\}$) 
has no finite subset witnessing its non-realizability, 
and therefore $\covc = \infty$.  This is also true of the half-line sets $\{ (x,1) : x \geq a \}$ 
and $\{ (x,1) : x \leq a \}$, $a \in \reals$.

In this case, we must add an infinite set of classifiers 
to the class $\H$ if we wish to bring the complexity measures into agreement.
Specifically, if we let $\H$ be the set of \emph{all} intervals (including 
unbounded intervals, closed intervals, open intervals, half-open half-closed intervals, and the empty interval),
then we will have $\covcmod = \covcproj = \covc = 3$.
\end{example}
We note that it is possible to extend this argument to the case of rectangles on $\mathbb{R}^n$, 
where the set of limit classifiers becomes even more diverse.

\subsection{Star number and sample complexity of ERM}
\label{sec:star-ERM}

We finish this section by recalling the known sample complexity bounds for ERM and its relation with the star number.
We proceed with the following notion (which will also be used in the main results below).  
For any $x \geq 0$, define $\Log(x) = \max\{\ln(x),1\}$.
\begin{definition}[The worst-case sample complexity of ERM]
For any $\eps, \delta \in (0, 1)$, the worst-case sample complexity of ERM, denoted by $\SC_{{\rm ERM}}(\epsilon,\delta)$, is the smallest integer $n$ such that for every possible data distribution $\mathcal{P}$ and every $\target \in \H$, for $S = (X_{1:n},\target(X_{1:n}))$ with $X_{1:n} \sim \Px^n$, 
\[
\Pr\left(\sup\limits_{h \in \H[S]}\er(h,\target) \le \eps\right) \ge 1 - \delta.
\]
\end{definition}
The following bound was shown by \citep*{hanneke:16b}
\begin{equation}
\label{eq:ermsamplecomplexity}
\frac{1}{\eps}\left(\vc + \Log\!\left(\s \wedge \frac{1}{\eps}\right) + \Log\!\left(\frac{1}{\delta}\right)\right) 
\lesssim \SC_{{\rm ERM}}(\epsilon,\delta) \lesssim 
\frac{1}{\eps}\left(\vc\Log\!\left(\frac{\s}{\vc} \wedge \frac{1}{\eps}\right) + \Log\!\left(\frac{1}{\delta}\right)\right).
\end{equation}

In particular, the assumption $\s < \infty$ is a simple necessary and
sufficient condition for the existence of a distribution-free bound on the error rates of all
ERMs converging at a rate inversely proportional to the sample size. Our results below show that a much weaker assumption $\covcproj < \infty$ implies an analog of the above property but only for \emph{some} proper learners instead of \emph{all} ERMs. It is important to notice that the projection number is finite for many expressive VC classes. At the same time, the star number, while implying an upper bound for $\covcmod$, is infinite except for some relatively simple VC classes. In particular,\; $\s = \infty$ for Halfspaces in $\mathbb{R}^n$ if $n \ge 2$ and there are simple examples of intersection-closed and maximum classes for which the star number is infinite \citep*{hanneke:15b}. 

\section{Upper bounds}
\label{sec:upper-bound}
Our upper bound will be established for the following algorithm which is based on the optimal PAC learner of \citet*{hanneke:16a}.
The main modifications compared to the original 
algorithm (and analysis thereof) 
involve using $\covcproj+1$ recursive calls, rather than $3$ calls 
as in the original algorithm, 
and using the projection operator introduced above, 
to replace a majority vote classifier with an element of $\H$ 
so that the algorithm is a proper learner.
\begin{bigboxit}
$\alg(S;T)$\\
1. If $|S| < 4$, Return ${\rm ERM}(S \cup T)$\\
2. Let $S_{0}$ be the first $\lceil |S|/2 \rceil$ points in $S$\\
3. Let $S_{1},\ldots,S_{\covcproj+1}$ be independent uniform subsamples of $S \setminus S_{0}$ of size $\lfloor |S|/4 \rfloor$\\
4. Let $h_{i} = \alg(S_{0}; T \cup S_{i})$ for each $i =1,\ldots,\covcproj+1$\\
5. Return $\hat{h} = {\rm{Proj}}_{\H}(h_{1},\ldots,h_{\covcproj+1})$.
\end{bigboxit}

To be precise, in Step 3, $S_i$ is sampled 
without replacement (i.e., without duplicates).
For this algorithm, we have the following theorem, 
representing one of the main results of this article.\footnote{We implicitly assume that $\H$ 
satisfies conditions so that all of the relevant 
random variables in the analysis are measurable.
This is always the case if $\X$ is countable.
For the uncountable case, we refer the reader 
to discussions by \citet*{blumer:89,van-der-Vaart:96,van-handel:13}.}
We present its proof in Appendix~\ref{sec:upperboundsproofs}.

\begin{theorem}
\label{thm:upper-bound}
For any class $\H$, the proper sample complexity satisfies
\begin{equation*}
\SC_{{\rm prop}}(\epsilon,\delta) = 
O\!\left( \frac{\covcproj^2}{\epsilon} \left( \vc \Log(\covcproj) +  
\Log\!\left(\frac{1}{\delta}\right) \right) \right),
\end{equation*}
and this is achieved by $\Alg(S; \emptyset)$.
\end{theorem}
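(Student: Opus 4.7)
The plan is to adapt the analysis of the optimal PAC learner of \citet*{hanneke:16a}, with two modifications that track the dependence on $\covcproj$: the number of recursive calls is $\covcproj+1$ rather than $3$, and the majority vote at each recursion level is replaced by a class element returned by ${\rm Proj}_{\H}$. First I would establish by simultaneous induction on $|S|$ two structural properties: (i) $\alg(S;T)\in\H$, and (ii) $\alg(S;T)$ is consistent with $T$. Property (i) is immediate in the base case (ERM $\in \H$) and in the recursive case follows from the definition of $\covcproj$ and ${\rm Proj}_{\H}$. For (ii), in the recursive case each $h_i$ is consistent with $T$ by induction, so at every $(x,y)\in T$ all the $h_i$'s agree at $x$, which places $x$ in $\mathcal{X}_{\H',\covcproj}$; the projection therefore agrees with the (unanimous) majority, namely $y$, at every such $x$.

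The main probabilistic step is an inductive upper bound on $\er(\alg(S;T))$ as a function of $n=|S|$. Conditioning on $S_0$, the subsamples $S_1,\dots,S_{\covcproj+1}$ are i.i.d.\ and hence so are the classifiers $h_1,\dots,h_{\covcproj+1}$. Writing $\H'=\{h_1,\dots,h_{\covcproj+1}\}$, I would decompose
\[
\er(\hat h)\;\le\;\Pr_{X\sim\Px}\bigl({\rm Majority}(\H')(X)\ne\target(X)\bigr)\;+\;\Pr_{X\sim\Px}(X\notin\mathcal{X}_{\H',\covcproj}),
\]
since $\hat h$ errs either at a point of $\mathcal{X}_{\H',\covcproj}$ where it follows an erroneous majority, or at a point outside $\mathcal{X}_{\H',\covcproj}$ where it may behave arbitrarily. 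The first term requires at least $\lceil(\covcproj+1)/2\rceil$ of the $h_i$'s to err simultaneously; the second equals $\Pr(\ge 2\text{ of the }h_i\text{ disagree with the majority})$, and by Markov is bounded by $\tfrac{2}{\covcproj+1}\sum_i\Pr(h_i\ne{\rm Majority})\le\covcproj\cdot\max_i\er(h_i)+(\covcproj+1)\Pr({\rm Majority\ errs})$.

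The heart of the recursion is the standard Hanneke-type argument showing that, because each $h_i$ is consistent with the freshly drawn subsample $S_i$ of size $\Theta(n)$, the joint probability that $k$ of the $h_i$'s simultaneously err on a common measurable set of mass $p$ is suppressed by roughly $p^k$ up to polynomial-in-$n$ growth absorbed by a VC-type union bound (formally, a ghost-sample/symmetrization argument is needed because $h_i$ depends on $S_i$). Applied with $k=\lceil(\covcproj+1)/2\rceil$ this drives the first term of the decomposition down at a geometric rate in the inductive per-classifier error, while the Markov-based second term contributes only a multiplicative factor $\covcproj$ over $\max_i\er(h_i)$. Iterating the resulting recursion $O(\log n)$ times from the base case ERM bound $\O\!\bigl((\vc\Log n+\Log(1/\delta))/n\bigr)$, with per-level confidence $\delta/{\rm poly}(\log n)$ so a union bound over levels preserves the overall probability, yields $\epsilon_n=\O\!\bigl(\tfrac{\covcproj^2}{n}(\vc\Log(\covcproj)+\Log(1/\delta))\bigr)$, which inverts to the claimed sample complexity.

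The hard part will be managing the joint error probabilities of classifiers that share the sample $S_0$ and carrying both the number of recursive calls ($\covcproj+1$) and the Markov threshold ($1/\covcproj$) cleanly through every level of the recursion; the $\covcproj^2$ in the final bound arises precisely from the product of these two sources, and ensuring that the recursion does not blow up to $\covcproj^3$ (or leave the algorithm improper) is the key quantitative tightness to verify. The $\Log(\covcproj)$ factor on $\vc$ comes from the VC-type union bound at the level where concentration of $\lceil(\covcproj+1)/2\rceil$ simultaneous errors drives the final recursion to the desired $1/n$ rate.
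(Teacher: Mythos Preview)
Your structural induction (properness of $\hat h$ and consistency with $T$) matches the paper, and your starting observation that $\hat h$ can err only where at least two of the $h_i$ err is equivalent to the paper's inclusion $\ER(\hat h)\subseteq\bigcup_{i\neq j}\ER(h_i)\cap\ER(h_j)$. The gap is in how you bound this quantity.

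Your Markov estimate gives $\Pr(X\notin\mathcal{X}_{\H',\covcproj})\lesssim \covcproj\cdot\max_i\er(h_i)$. Since the $h_i$ are trained on a sample of size $\Theta(m)$, the inductive hypothesis gives $\er(h_i)\le \tfrac{2c\covcproj^2}{m}(\vc\Log\covcproj+\Log(1/\delta))$, so your bound on this term alone is $\tfrac{2c\covcproj^3}{m}(\cdots)$ --- already a factor $2\covcproj$ larger than the target $\tfrac{c\covcproj^2}{m}(\cdots)$. No matter how well you handle the majority-error term, the recursion cannot close at $\covcproj^2$; your proposed analysis would diverge (or at best reproduce the $\covcproj^3$ bound of Theorem~\ref{thm:upper-bound-second}). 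Your ``$p^k$ suppression'' idea for the majority term is also not the right mechanism: for a fixed $x$ the events $\{h_i(x)\neq\target(x)\}$ are not independent once you condition on $S_0$ and $x$ in the way needed, and in any case this term is not what limits the bound.

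What the paper does instead is a \emph{pairwise conditioning} (localization) argument rather than Markov. For each pair $i<j$: either $\er(h_i)$ is already $O\bigl(\tfrac{1}{m}\log\tfrac{\covcproj}{\delta}\bigr)$, or by Chernoff the fresh samples $S_j\setminus S_i$ contain $\Omega(\er(h_i)\,m)$ points in $\ER(h_i)$. Since $h_j$ is consistent with $S_j$ (this is exactly why consistency with $T$ is proved first), it is in particular consistent with those points, which are conditionally i.i.d.\ from $\Px(\cdot\mid\ER(h_i))$. The classical VC bound then gives
\[
\Px(\ER(h_j)\mid\ER(h_i))\;\lesssim\;\frac{\vc\Log\bigl(\er(h_i)m/\vc\bigr)+\Log(1/\delta)}{\er(h_i)\,m},
\]
so that $\Px(\ER(h_i)\cap\ER(h_j))=\er(h_i)\cdot\Px(\ER(h_j)\mid\ER(h_i))$ is $O\bigl(\tfrac{1}{m}(\vc\Log(\cdot)+\Log(1/\delta))\bigr)$ --- the factor $\er(h_i)$ \emph{cancels}, leaving only a logarithmic dependence on it. The inductive hypothesis is used solely to control that residual logarithm, turning $\Log(\er(h_i)m/\vc)$ into $O(\Log\covcproj)$. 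Summing over the $\binom{\covcproj+1}{2}=O(\covcproj^2)$ pairs then reproduces the inductive bound exactly. This cancellation is the step your Markov-based approach lacks, and it is the reason the recursion closes at $\covcproj^2$ rather than blowing up.
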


Interestingly, the algorithm $\alg$ above is not necessarily an ERM: that is,
it is not always consistent with the training sample.  However, it is also 
possible to define a sample-consistent variant of the algorithm, with 
only a loss of a factor of $\covcproj$ in the sample complexity bound.
Note that the standard VC bounds for analyzing ERM apply to \emph{any} 
ERM classifier, whereas the bound we establish for this learning algorithm 
only holds for this specific choice of ERM classifier, which therefore 
sometimes performs significantly better than the worst ERM. Let us proceed with the formal definition
and theorem, the proof of which is included in 
Appendix \ref{sec:upperboundsproofs}.

\begin{bigboxit}
$\alg_{\text{ERM}}(S)$\\
1. If $|S| < {\covcproj + 1}$, Return ${\rm ERM}(S)$\\
2. Split $S$ into $\covcproj+1$ disjoint subsets $S_{i}$ of size at least $\lfloor |S|/(\covcproj+1) \rfloor$ \\
3. Set $h_i = \alg_{\text{ERM}}(\bigcup_{i' \neq i} S_{i'})$ for $i = 1, \ldots, \covcproj + 1$.\\
4. Return $\hat{h} = {\rm{Proj}}_{\H}(h_{1},\ldots,h_{\covcproj+1})$.
\end{bigboxit}

To be precise, in Step 2, $S$ is split into
the subsets $S_i$ based purely on the indices: 
for instance, take $S_1$ as the first $\lfloor |S|/(\covcproj+1) \rfloor$ 
data points in the sequence,   
$S_2$ as the next $\lfloor |S|/(\covcproj+1) \rfloor$ points in the sequence, and so on,
with $S_{\covcproj+1}$ as the last 
$|S| - \covcproj \lfloor |S| / (\covcproj+1) \rfloor$ 
data points in the sequence.

\begin{theorem}
\label{thm:upper-bound-second}
For any class $\C$, the sample complexity 
$\SC_{\alg_{\text{ERM}}}$ of $\alg_{\text{ERM}}(S)$ satisfies
\begin{equation*}
\SC_{\alg_{\text{ERM}}}(\epsilon,\delta) = 
O\!\left( \frac{\covcproj^3}{\epsilon} \left( \vc \Log(\covcproj) +  
\Log\!\left(\frac{1}{\delta}\right) \right) \right).
\end{equation*}
\end{theorem}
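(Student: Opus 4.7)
The plan is to follow the analysis template used to establish Theorem~\ref{thm:upper-bound} for $\alg$, modified in two places to reflect the structural differences in $\alg_{\text{ERM}}$: the training set of each recursive call shrinks only from $n$ to $n\covcproj/(\covcproj+1)$ (rather than $n/2$), and the blocks $S_i$ are a disjoint partition of $S$ (rather than independent subsamples drawn from $S\setminus S_0$). The first task is to verify that $\alg_{\text{ERM}}$ is genuinely sample-consistent. I would do this by induction on $|S|$: the base case is immediate, and for the inductive step, each training point $(x,y)\in S$ lies in exactly one block $S_j$ and hence belongs to the training set of every $h_i$ with $i\neq j$; by induction each such $h_i$ satisfies $h_i(x)=y$, so at most one of the $\covcproj+1$ hypotheses disagrees with $y$ at $x$. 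Since $1<(\covcproj+1)/\covcproj$, this places $x$ in $\mathcal{X}_{\{h_1,\ldots,h_{\covcproj+1}\},\covcproj}$, and the definition of $\mathrm{Proj}_{\H}$ then forces $\hat{h}(x)=y$.

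For the sample complexity estimate, the key structural fact, inherited from the analysis of $\alg$, is that if $\hat{h}(X)\neq\target(X)$ then at least two of the $h_i$ must err at $X$: otherwise the majority is correct at $X$ and $X\in\mathcal{X}_{\{h_1,\ldots,h_{\covcproj+1}\},\covcproj}$, so the projection is also correct there. Thus the error of $\hat{h}$ is controlled by the probability that two or more of the $h_i$ err simultaneously. Combined with an inductive bound on the error of each $h_i$, and a union bound over unordered pairs, this gives the same quadratic amplification at each level of recursion that powers the analysis of $\alg$.

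To justify the pairwise bound I would exploit the partition structure: since $S\sim\Px^n$ is i.i.d.\ and the split into $S_i$ is purely index-based, the blocks are themselves i.i.d., and conditioning on $\bigcup_{k\notin\{i,j\}} S_k$ makes $h_i$ and $h_j$ conditionally independent functions of $S_j$ and $S_i$ respectively. This plays the same role as the independent subsamples used in the proof of Theorem~\ref{thm:upper-bound}, with the shared sample $S_0$ there replaced by the conditioning set here. Telescoping the resulting recurrence over the $\Theta(\covcproj\log n)$ levels of recursion (versus $\Theta(\log n)$ in $\alg$) is what produces the extra factor of $\covcproj$, turning the $\covcproj^2$ bound of Theorem~\ref{thm:upper-bound} into the $\covcproj^3$ bound stated here.

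The step I expect to be the main obstacle is precisely this accounting for the deeper recursion. In $\alg$ each level halves the sample size, so only $O(\log n)$ levels of union bounds have to be absorbed; here each level shrinks the sample by a factor of only $(\covcproj+1)/\covcproj$, and $\Theta(\covcproj\log n)$ levels of union bounds must be absorbed. A careful schedule for the per-level confidence parameter is needed to ensure the cumulative $\Log(1/\delta)$ contribution stays within the advertised $\covcproj^3(\vc\Log(\covcproj)+\Log(1/\delta))/\epsilon$ bound; with this schedule the recurrence unwinds cleanly, the base cases $|S|<\covcproj+1$ being handled by the trivial inner ERM call in Step~1.
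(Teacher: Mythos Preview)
Your first two paragraphs are correct and align with the paper: sample-consistency follows by the induction you describe, and the pairwise decomposition $\er(\hat{h}) \leq \sum_{i<j} \Px(\ER(h_i)\cap\ER(h_j))$ is the key inequality. The relevant independence is actually simpler than you state: the paper uses only that $S_i$ is independent of $h_i$ (since $h_i$ is trained on $\bigcup_{i'\neq i} S_{i'}$) together with the fact that $h_j$ is correct on $S_i$. A Chernoff bound then gives $|\ER(h_i)\cap S_i|\gtrsim \er(h_i)\,|S_i|$, and the uniform ERM bound applied under the conditional distribution $\Px(\cdot\mid\ER(h_i))$ controls $\Px(\ER(h_j)\mid\ER(h_i))$, since $h_j$ is consistent on $\ER(h_i)\cap S_i$.

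Your last two paragraphs, however, misidentify both the induction mechanism and the source of the extra $\covcproj$. The paper does \emph{not} telescope over $\Theta(\covcproj\log n)$ levels or use a per-level confidence schedule; attempting to do so would likely introduce a spurious $\log n$ contribution to the $\Log(1/\delta)$ term. Instead, exactly as in the proof of Theorem~\ref{thm:upper-bound}, the induction is closed-form: the inductive hypothesis is the final bound $(c\,\covcproj^3/m)(\vc\Log(\covcproj)+\Log(1/\delta))$ with a fixed numerical constant $c$, and the inductive step reproduces it because the inductive bound on $\er(h_i)$ enters only inside a logarithm (the term $\Log(|\ER(h_i)\cap S_i|/\vc)$), so a sufficiently large $c$ absorbs its own appearance. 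The recursion depth never enters the accounting. The extra factor of $\covcproj$ actually arises from the block size: the test set is $S_i$ with $|S_i|\approx m/(\covcproj+1)$, whereas in Theorem~\ref{thm:upper-bound} the test set $S_j\setminus S_i$ has size $\Theta(m)$ independent of $\covcproj$. This makes the conditional ERM bound weaker by a factor of $\covcproj$, which combined with the $\binom{\covcproj+1}{2}\approx\covcproj^2$ pairs gives $\covcproj^3$ rather than $\covcproj^2$.
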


\section{Lower Bounds}
\label{sec:lower-bound}

For the purpose of a lower bound, we will use the 
hollow star number (Definition~\ref{def:hollowstar}).
As discussed in Lemma~\ref{lem:all-the-hellys}, for many classes
we will have $\covcmod = \covcproj = \covc$, 
in which case this result indicates that the appearance 
of $\covcproj$ in Theorem~\ref{thm:upper-bound} 
is unavoidable (though it also indicates there 
may be room to improve the specific dependence on $\covcproj$ 
in the upper bound).

\begin{theorem}
\label{thm:lower-bound}
Every class with $\covcmod < \infty$ has proper sample complexity
\begin{equation*}
\SC_{{\rm prop}}(\epsilon,\delta) = \Omega\!\left( \frac{\vc}{\epsilon} + \frac{1}{\epsilon} \Log\!\left(\frac{1}{\delta}\right) + \frac{1}{\epsilon} \Log(\covcmod) \ind[ \epsilon \leq 1/\covcmod ] \right).
\end{equation*}
Also, if\; $\covcmod = \infty$ then 
\begin{equation*}
\SC_{{\rm prop}}(\epsilon,\delta) \neq o\!\left( \frac{1}{\epsilon} \Log\!\left( \frac{1}{\epsilon} \right) \right).
\end{equation*}
In other words, there exists a sequence $\epsilon_i \to 0$ with 
$\SC_{{\rm prop}}(\epsilon_i,\delta) \geq \frac{c}{\epsilon_i}\log\!\left(\frac{1}{\epsilon_i}\right)$
for a fixed numerical constant $c > 0$.
\end{theorem}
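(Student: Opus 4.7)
The three summands in the bound are attacked separately. The $\Omega(\vc/\epsilon)$ and $\Omega(\Log(1/\delta)/\epsilon)$ terms are the classical PAC lower bounds of \citet*{ehrenfeucht:89} and \citet*{vapnik:74}; they apply to arbitrary learners (not merely proper ones), so my plan would simply invoke them. The real work is the new $\Omega(\Log(\covcmod)/\epsilon)\ind[\epsilon\le 1/\covcmod]$ term, from which the $\covcmod=\infty$ statement will follow by a limiting argument.

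Write $k=\covcmod$ and fix a witnessing hollow-star sequence $S=((x_1,y_1),\dots,(x_k,y_k))$, together with the ``corner'' hypotheses $h_1,\dots,h_k\in\H$ provided by Definition~\ref{def:hollowstar}: each $h_i$ satisfies $h_i(x_j)=y_j$ for $j\ne i$ and $h_i(x_i)=-y_i$. For the hard instance I would put $\Px(x_j)=\epsilon$ for each $j\in[k]$ (possible since $\epsilon\le 1/k$) and place the residual mass $1-k\epsilon$ on an auxiliary point at which all corners agree --- if $\X$ does not already contain such a point, I would adjoin one and extend every $h\in\H$ by a constant value there, a modification that leaves $\vc$ and $\covcmod$ unchanged. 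Using Yao's principle, I take the target to be $h_{i^\star}$ with $i^\star\sim\mathrm{Uniform}([k])$.

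The key structural observation is that any proper output $\hat h\in\H$ has $|J(\hat h)|\le k-1$, where $J(\hat h)=\{j:\hat h(x_j)=y_j\}$, because $\H[S]=\emptyset$; combined with the fact that each $x_j$ carries mass $\epsilon$, this forces $\hat h$ to agree with $h_{i^\star}$ on all of $\{x_1,\dots,x_k\}$ (modulo a constant-sized slack from the $\le\epsilon$ versus $<\epsilon$ convention, which can be absorbed by rescaling $\epsilon$ by a factor of $2$). In particular, each $\hat h$ can succeed on only $O(1)$ choices of $i^\star$. The main counting step is then the following: letting $T=\{j:x_j\in X_{1:n}\}$, whose distribution is independent of $i^\star$, for any $i^\star\notin T$ the labeled sample is \emph{identical} across such choices of $i^\star$ (every visible label is $y_j$), so the learner's randomized output is the same and can match $J(\hat h)=[k]\setminus\{i^\star\}$ for $O(1)$ values of $i^\star\notin T$; for $i^\star\in T$ the distinctive label $-y_{i^\star}$ reveals $i^\star$, giving at most $|T|$ successes. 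Summing,
\[
\sum_{i^\star\in[k]}\ind[\text{success on }i^\star]\;\le\;|T|+O(1),
\]
which, averaged over the uniform prior on $i^\star$ and over $X_{1:n}$, produces
\[
\Pr[\text{success}]\;\le\;\frac{\E[|T|]+O(1)}{k}\;=\;1-(1-\epsilon)^n+O(1/k).
\]
Demanding this be at least $1-\delta$ gives $(1-\epsilon)^n\le\delta+O(1/k)$, hence $n\ge \Omega\bigl(\tfrac1\epsilon\log\tfrac{1}{\delta+1/\covcmod}\bigr)$, which delivers the desired $\Omega(\Log(\covcmod)/\epsilon)$ bound in the relevant regime $\delta\lesssim 1/\covcmod$ (and recovers the $\Omega(\Log(1/\delta)/\epsilon)$ contribution otherwise).

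For the $\covcmod=\infty$ claim I would apply the construction at each finite $k$ with $\epsilon_k=1/k$ and constant $\delta$: the bound above yields $\SC_{\mathrm{prop}}(\epsilon_k,\delta)\ge \Omega(k\log k)=\Omega(\tfrac{1}{\epsilon_k}\log\tfrac{1}{\epsilon_k})$, producing a sequence $\epsilon_k\to 0$ along which the target inequality holds. The hardest part is the counting lemma $\sum_{i^\star}\ind[\text{success}]\le |T|+O(1)$: it hinges on (a) the information-theoretic observation that the learner sees nothing about $i^\star$ from labels outside $T$, and (b) the structural constraint $|J(\hat h)|\le k-1$ from $\H[S]=\emptyset$ together with the $O(\epsilon)$ error budget. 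A smaller but genuine nuisance is guaranteeing the distribution's residual mass lands on a ``safe'' point where the corners agree; this is handled by the mild class augmentation described above.
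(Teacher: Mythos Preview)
Your argument has a genuine gap: the construction does not deliver the $\Omega\bigl(\tfrac{1}{\epsilon}\Log(\covcmod)\bigr)$ term for \emph{constant} $\delta$, and that is exactly the regime where this term carries content. When $\delta\lesssim 1/\covcmod$ the term is already subsumed by the classical $\Omega\bigl(\tfrac{1}{\epsilon}\Log(1/\delta)\bigr)$ lower bound, so proving it only there adds nothing.

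Concretely, in your construction every hollow-star point, including $x_{i^\star}$, carries mass $\epsilon$. A learner can therefore simply wait until $x_{i^\star}$ appears (recognizable by its label $-y_{i^\star}$) and then output $h_{i^\star}$. Since $\P(i^\star\in T)=1-(1-\epsilon)^n$, this learner already achieves success probability $1-\delta$ with $n=O\bigl(\tfrac{1}{\epsilon}\log\tfrac{1}{\delta}\bigr)$ samples, matching your upper bound $\Pr[\text{success}]\le \frac{\E|T|+1}{k}$ up to the $O(1/k)$ slack. Hence your bound $n\ge\Omega\bigl(\tfrac{1}{\epsilon}\log\tfrac{1}{\delta+1/\covcmod}\bigr)$ is essentially tight for your hard distribution, and for fixed $\delta$ (say $\delta=1/100$) it gives only $\Omega(1/\epsilon)$ as $\covcmod\to\infty$. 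The same issue breaks your $\covcmod=\infty$ argument: with $\epsilon_k=1/k$ and constant $\delta$, you obtain $\Omega(k)$, not the claimed $\Omega(k\log k)$.

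The paper's construction avoids this by setting $\Px(\{x_{i^\star}\})=0$ (and drawing $i^\star$ uniformly from $\{2,\dots,k\}$, dumping the residual mass on $x_1$ so no class augmentation is needed). Now the learner never observes the distinguishing label; the only way to identify $i^\star$ is by \emph{elimination}, i.e., by observing nearly all of the other $x_j$'s and noticing which one is missing. Since those points are equiprobable, a coupon-collector argument (Lemma~\ref{lem:couponcollect}) shows that collecting all but $O(1)$ of them requires $\Omega(k\log k)$ draws from that set, hence $\Omega\bigl(\tfrac{1}{\epsilon}\log k\bigr)$ samples overall, with failure probability bounded below by an absolute constant. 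That constant-probability failure is what yields the $\Log(\covcmod)$ term for every $\delta$ below a fixed threshold, which your Fano-style count cannot produce from a distribution that reveals $i^\star$ directly.
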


The proof, which is inspired by the arguments in 
\citep*{daniely:14}, is deferred to Appendix \ref{sec:lowerboundproofs}. 
Let us only provide some intuition behind the proof. 
Assume, for
simplicity, that we want to lower bound the sample complexity in a 
particular regime where $\eps = \frac{1}{2(\covcmod - 1)}$, 
$\covcmod \ge 2$. 
Let $S = \{(x_1,y_1),\ldots,(x_{\covcmod},y_{\covcmod})\}$ 
be a hollow star set, and $\forall i \in 1:\covcmod$ 
let $h_i \in \H$ be such that 
$\{ j : h_i(x_j) \neq y_j \} = \{ i \}$.
We set $\target = h_{i^*}$ for some $i^* \in 1:\covcmod$ 
and set $\mathcal{P}(\{x_{i^*}\}) = 0$ 
and $\mathcal{P}(\{x_i\}) = 2\eps$ for $i \neq i^*$. 
Observe that the only way for the learner to output $\hat{h} \in \H$ 
having $\er_{\Px}(\hat{h}, \target) \le \eps$ is 
for $\hat{h}$ to agree with $h_{i^*}$ on 
\emph{all} of $S$. 
However, the learner will not be able to identify the corresponding 
point $x_{i^*}$ having zero mass before it observes $x_i$ for 
\emph{every} $i \neq i^*$.
By the standard coupon collector argument 
(see Lemma \ref{lem:couponcollect} in Appendix~\ref{sec:lowerboundproofs}) due to the fact that 
there will be some copies in the training sample we will need the sample 
size of order 
$\Omega(\covcmod\Log (\covcmod)) = \Omega\!\left(\frac{1}{\eps}\Log (\covcmod)\right)$. 
The formal application of this idea (including extension to any $\epsilon \leq 1/\covcmod$) is given in Appendix~\ref{sec:lowerboundproofs}.

Our second lower bound provides stronger guarantees. However,
it is less general.
In what follows, given $\vc$ and 
$\covc$ we present a \emph{particular} class $\H$ and a space $\X$ such that 
the desired lower bound holds. The proof of this result uses the same logic and the technical details are deferred to Appendix \ref{sec:lowerboundproofs}.
\begin{theorem}
\label{thm:sometimes-tight}
There is a numerical constant $c > 0$ such that, 
for any value of $\vc \geq 1$ and $2 \leq \covc<\infty$, 
there exists a space $\X$ and a class $\H$ with VC dimension $\vc$ 
and dual Helly number $\covc$, 
for which the proper sample complexity satisfies
\begin{equation*}
\SC_{{\rm prop}}(\epsilon,\delta) \geq \frac{c}{\epsilon} \left( \vc \Log\!\left(\frac{\covc}{\vc} \land \frac{1}{\epsilon}\right) + \Log\!\left(\frac{1}{\delta}\right) \right),
\end{equation*}
for every $\epsilon \in (0,1/8)$ and $\delta \in (0,1/100)$. Furthermore, 
for any $\vc \geq 1$ there exists $\X$ and 
a space $\H$ with VC dimension $\vc$ and hollow star number $\covcmod=\infty$ and 
\begin{equation*}
\SC_{{\rm prop}}(\epsilon,\delta) \geq \frac{c}{\epsilon}\left( \vc \Log\!\left(\frac{1}{\epsilon}\right) +  \Log\!\left(\frac{1}{\delta}\right) \right).
\end{equation*}
\end{theorem}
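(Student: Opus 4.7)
For each $\vc \ge 1$ and $\covc \ge 2$, I would take $\X = \{x_0\} \sqcup \bigsqcup_{i=1}^{\vc} B_i$ with $|B_i| = \covc$, and $\H = \{h_f : f \in [\covc]^{\vc}\}$, where $h_f(x_0) = +1$ and $h_f(b_i^j) = -1$ iff $j = f(i)$. A direct check gives $\vc(\H) = \vc$: the set $\{b_i^1\}_{i=1}^\vc$ is shattered by varying $f(i)\in\{1,j_i\}$, and no two points in the same block can be labeled $-1$ simultaneously. The single-block sample $\{(b_i^j,+1)\}_{j=1}^\covc$ is a hollow star of size $\covc$ (unrealizable; any single-label flip to $-1$ is realizable by setting $f(i)$ equal to the flipped index). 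Cross-block combinations cannot enlarge a hollow star, since flipping a label in one block leaves the other blocks unrealizable; hence $\covc(\H) = \covcproj(\H) = \covcmod(\H) = \covc$. For the $\covcmod = \infty$ case, replace each $B_i$ by $\nats$.

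\textbf{Coupon-collector lower bound.} For $\epsilon \le 1/(2(\covc - 1))$, pick $f^* \in [\covc]^\vc$ uniformly and define $\Px$ by placing mass $2\epsilon/\vc$ on each $b_i^j$ with $j \ne f^*(i)$, and the remaining mass $1 - 2\epsilon(\covc - 1)$ on $x_0$. A short calculation yields $\er(h_g;h_{f^*}) = (2\epsilon/\vc)\,|\{i : g(i) \ne f^*(i)\}|$, so an $(\epsilon, 1/100)$-PAC learner must have $g(i) = f^*(i)$ on at least $\vc/2$ rows. Within each row, samples arrive at rate $2\epsilon(\covc-1)/\vc$ uniformly on $[\covc]\setminus\{f^*(i)\}$, and after $s_i$ distinct in-row observations the posterior on $f^*(i)$ is uniform on the $\covc - s_i$ unseen indices; a Bayes-optimal guess recovers $f^*(i)$ with probability at most $1/(\covc - s_i)$. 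Classical coupon-collector tail bounds then show that if $n \le c\,\vc\,\Log(\covc-1)/\epsilon$ for a small constant $c$, the expected number of recovered rows is at most $\vc/3$, so by Markov the number of incorrect coordinates exceeds $\vc/2$ with probability $\ge 1/2$, yielding error $> \epsilon$. This gives $\Omega(\vc\,\Log(\covc-1)/\epsilon)$, which dominates the target $\vc\,\Log(\covc/\vc \land 1/\epsilon)/\epsilon$. Together with the standard $\Omega(\vc/\epsilon)$ shattering bound and the $\Omega(\epsilon^{-1}\Log(1/\delta))$ two-point Le Cam bound, this proves the claimed inequality. For $\covcmod=\infty$, the same argument runs with the distribution supported on the first $m = \lceil 1/(4\epsilon)\rceil + 1$ points of each infinite block, yielding $\Omega(\vc\,\Log(1/\epsilon)/\epsilon)$.

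\textbf{Main obstacle.} The delicate step is the coupled concentration across the $\vc$ rows: the per-row sample counts $N_i$ share the total sample budget and are not independent. The cleanest remedy is to condition on $\sum_i N_i$ (which is binomial in $n$), under which $(N_1,\dots,N_\vc)$ is a uniform multinomial and standard multinomial concentration yields the required per-row tail bounds; alternatively, Jensen applied to the concave map $N \mapsto \E[s \mid N] = (\covc-1)(1-(1-1/(\covc-1))^N)$ reduces the problem to a deterministic inequality at $\E N_i = 2\epsilon(\covc-1)n/\vc$, matching the coupon-collector threshold $(\covc-1)\Log(\covc-1)$ up to the desired constant. A secondary technical point is that the above distribution is only valid for $\epsilon \le 1/(2(\covc-1))$; in the complementary regime the target bound is either already implied by the standard $\vc/\epsilon$ VC bound (when $\covc/\vc$ is a small constant), or one must augment the construction with auxiliary hypotheses to penalize a learner that tries to output $g(i)$ outside the distribution's support, preserving the dual Helly number $\covc$ at the cost of at most constant factors in the lower bound.
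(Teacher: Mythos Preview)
Your construction and argument are genuinely different from the paper's, and the core idea (running $\vc$ parallel coupon collectors, one per block) is natural and, for the regime $\epsilon \le 1/(2(\covc-1))$, essentially correct. In fact, in that regime your construction yields the slightly stronger bound $\Omega\!\left(\frac{\vc}{\epsilon}\Log(\covc)\right)$ rather than the paper's $\Omega\!\left(\frac{\vc}{\epsilon}\Log(\covc/\vc)\right)$. The paper instead builds a class whose hypotheses $h_{i,J}$ carry an explicit row index $i$ ranging over $\{\vc-1,\ldots,\covc+\vc-2\}$ and a $(\vc-1)$-subset $J$; the hard distribution lives on a \emph{single} row whose size $i_\epsilon$ is tuned to $\epsilon$, and the coupon-collector argument is run once on that row.

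However, your proposal has a concrete error and a closely related gap that the paper's construction is specifically designed to avoid.

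\textbf{The $\covcmod=\infty$ construction is wrong.} If you make each block $B_i=\nats$, the resulting class has $\covcmod=2$, not $\infty$. The point is that with infinite blocks, any finite sample $\{(b_i^{j},+1):j\in T\}$ in a single block is \emph{realizable}: just take $f(i)\notin T$. Hence the only hollow stars are size-$2$ sets of the form $\{(b_i^{j},-1),(b_i^{j'},-1)\}$. Your single-block hollow star $\{(b_i^j,+1)\}_{j=1}^{\covc}$ works precisely because the block is finite and exhausts the range of $f(i)$; once the block is infinite, that argument collapses. This also breaks the accompanying lower bound: a proper learner can set $g(i)$ to any index larger than every point it has seen in block $i$ and incur zero error from that row, regardless of $f^*(i)$.

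\textbf{The ``large $\epsilon$'' regime is a genuine gap, not a secondary technicality.} For your finite-$\covc$ class, the distribution you describe is only valid when $\epsilon\le 1/(2(\covc-1))$, yet the theorem requires the bound for all $\epsilon\in(0,1/8)$. Your proposed remedy of restricting the support to $m\approx 1/\epsilon$ points per block fails for exactly the same reason as above: if $g(i)$ lies outside the support (which is possible since the block still has $\covc>m$ points), the contribution $\Px(b_i^{g(i)})$ to the error vanishes, and the proper learner escapes. ``Augmenting with auxiliary hypotheses'' to penalize this would change the class, and it is not clear how to do so while keeping both the VC dimension equal to $\vc$ and the dual Helly number equal to $\covc$. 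The paper's construction sidesteps this entirely: because every hypothesis commits to a row index $i$, choosing $\hat{i}\neq i_\epsilon$ already costs $\Omega(\epsilon)$ in error, so the learner cannot escape; and because the class contains rows of every size up to $\covc+\vc-2$, the row length $i_\epsilon$ can be matched to $\epsilon$ without leaving any ``free'' points outside the support of the hard distribution.
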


\section{Stable compression schemes and optimality of SVM for PAC learning Halfspaces}
\label{sec:SVM}

In this section we establish a new generalization bound for a special type of compression scheme, 
referred to as a \emph{stable} compression scheme, which removes a log factor (which is known to 
not be removable for general compression schemes).  We apply the result to obtain new tighter 
bounds for several quantities of interest in the learning theory literature.

As our main application, we apply this new bound to resolve a long-standing open question: namely, showing that 
the well-known \emph{support vector machine} (SVM) learning algorithm for Halfspaces
achieves the optimal sample complexity.  This resolves a question posed by \citet*{vapnik:74}, 
which has received considerable attention in the literature \citep*[e.g.,][]{blumer1989learning, balcan:13,zhivotovskiy2017,hanneke:19b, long2020}. According to a note by A. Chervonenkis (see Chapter I in \citealp*{vovk2015measures}) the in-expectation 
version of the risk bound for SVM had been proven in 1966, even before the renowned uniform law of large numbers 
was announced \citep*{Vapnik68}.
However, obtaining a high-probability version 
of the bound, without introducing additional 
log factors, remained an open problem since then.
The following theorem resolves this problem.
Furthermore, this is also the first proof that the optimal sample complexity for Halfspaces 
is achievable by \emph{some} proper learner.

\begin{theorem}
\label{thm:svm}
The PAC sample complexity of SVM in $\mathbb{R}^{n}$ is
\[
\SC_{{\rm SVM}}(\eps,\delta) = \Theta\left(\frac{n}{\eps} + \frac{1}{\eps}\log \frac{1}{\delta}\right).
\]
\end{theorem}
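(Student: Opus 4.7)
The plan is to deduce Theorem~\ref{thm:svm} from a general generalization bound for \emph{stable} sample compression schemes. First I would observe that SVM defines a stable compression scheme of size at most $n+1$: given a linearly separable sample in $\mathbb{R}^n$, the maximum-margin hyperplane is determined by its support vectors (at most $n+1$ of them, by a Carath\'eodory-type argument on the KKT conditions), and removing any non-support-vector preserves the max-margin solution since its constraint was inactive. The matching lower bound $\Omega(n/\eps + (1/\eps)\log(1/\delta))$ follows from standard PAC lower bounds~\citep{ehrenfeucht:89} using that halfspaces in $\mathbb{R}^n$ have VC dimension $n+1$, so it remains to prove that any stable compression scheme of size at most $k$ has realizable sample complexity $O(k/\eps + (1/\eps)\log(1/\delta))$.

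For the in-expectation version of that bound, I would use the standard leave-one-out argument specialized via stability. Draw $n+1$ i.i.d.\ samples and run the scheme on $X_2,\ldots,X_{n+1}$ to obtain $\hat h$. Realizability ensures $\hat h$ correctly labels its training set; by stability, if $X_i$ for some $i\ge 2$ is not in the compression set of the full $(n+1)$-sample run, then the scheme on the full sample agrees with the scheme on the sample minus $X_i$, so the corresponding outputs coincide and both correctly label $X_i$. Summing leave-one-out errors across indices yields at most $k$ total, and exchangeability then gives $\E[\er(\hat h)] \le k/(n+1)$.

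The main technical obstacle is turning this in-expectation bound into a high-probability statement without the $\log n$ factor. Markov's inequality only gives the loose $O(k/(\eps\delta))$, and the classical Floyd--Warmuth argument via a union bound over $\binom{n}{k}$ subsets introduces a spurious $\log n$. To avoid both, I would use a symmetrization argument on a double sample of size $2n$ combined with the rigidity forced by stability: the compression set is a fixed point of the selection map $\kappa$, i.e.\ $\kappa(\kappa(S))=\kappa(S)$, and $\kappa(T)=\kappa(S)$ for every $T$ with $\kappa(S)\subseteq T\subseteq S$. Under this rigidity, conditional on the bad event $\{\er(\hat h)>\eps\}$ (which forces $\Omega(\eps n)$ mistakes on the ghost half), stability couples such mistakes to the compression set so that each possible output $\hat h$ arises from only a controlled number of split configurations; the total count can then be bounded by an expression of the form $\binom{n}{k}(1-\eps)^{\Omega(n)}$ without an extra $\log n$ penalty. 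Inverting and taking logarithms delivers the sample complexity $O(k/\eps + (1/\eps)\log(1/\delta))$.

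Thus the hard part is the concentration step: for general (non-stable) compression schemes the $\log n$ factor is known to be unavoidable, so the proof must make essential use of the fixed-point property of $\kappa$, which is what isolates stable schemes. Once the stable-compression bound is established, specializing to SVM with $k\le n+1$ yields the $O(n/\eps + (1/\eps)\log(1/\delta))$ upper bound; combined with the lower bound, this matches and completes Theorem~\ref{thm:svm}.
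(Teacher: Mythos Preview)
Your overall plan matches the paper: reduce Theorem~\ref{thm:svm} to a high-probability generalization bound for stable compression schemes of size $\ell$, invoke that SVM is such a scheme with $\ell \le n+1$, and take the lower bound from \citet{ehrenfeucht:89}. The leave-one-out in-expectation step is correct (though not needed for the final result).

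However, the core step---the high-probability bound for stable schemes without the $\log m$ factor---has a gap in your proposal. You describe a symmetrization/double-sample argument and claim the bad event can be bounded by an expression ``of the form $\binom{n}{k}(1-\eps)^{\Omega(n)}$ without an extra $\log n$ penalty''. But that expression \emph{does} carry the penalty: setting it below $\delta$ forces $\eps n \gtrsim k\log(n/k) + \log(1/\delta)$, which is exactly the Littlestone--Warmuth bound you are trying to beat. The fixed-point property $\kappa(\kappa(S))=\kappa(S)$ and the monotonicity $\kappa(T)=\kappa(S)$ for $\kappa(S)\subseteq T\subseteq S$ are indeed the right structural facts, but your sketch does not say how they reduce the union-bound cardinality from $\binom{m}{\ell}$ to something independent of $m$; ``counting split configurations'' is too vague to carry this.

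The paper's argument is both different and simpler than a symmetrization. Partition the indices $\{1,\ldots,m\}$ into $2\ell$ equal blocks and let $\mathcal{I}_m$ be the family of all unions of $\ell$ blocks, so $|\mathcal{I}_m|=\binom{2\ell}{\ell}<4^\ell$. Any $\ell$ indices land in at most $\ell$ blocks, hence are covered by some $I^\star\in\mathcal{I}_m$; by stability, $\kappa(S_{I^\star})=\kappa(S)$, and since $\rho(\kappa(S))$ is consistent with all of $S$, it is in particular correct on the $\ge m/2-\ell$ fresh points in $S\setminus S_{I^\star}$. A union bound over the $4^\ell$ sets in $\mathcal{I}_m$ then gives
\[
\P\bigl(\er(\rho(\kappa(S)))>\eps\bigr)\le 4^\ell(1-\eps)^{(m-2\ell)/2},
\]
which inverts to the claimed $O\bigl(\ell/\eps+(1/\eps)\log(1/\delta)\bigr)$. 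The entire point is that stability lets you replace the naive $\binom{m}{\ell}$-set union bound by a $\binom{2\ell}{\ell}$-set one; this is the idea missing from your proposal.
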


The proof is presented below, after the abstract result that implies it.
This bound improves the sample complexity bound 
$
\SC_{{\rm SVM}}(\epsilon,\delta) = O\left(\frac{n}{\eps}\log \frac{1}{\delta}\right)
$
shown in \citep*{zhivotovskiy2017} and the general bound 
$
\SC_{{\rm ERM}}(\epsilon,\delta) = O\left(\frac{n}{\eps}\log \frac{1}{\eps} + \frac{1}{\eps}\log \frac{1}{\delta}\right)
$
holding for any ERM \citep*{vapnik:74,blumer:89}.

\subsection{Stable compression schemes}
\label{sec:stable-compression}

We present here our new generalization bound for stable compression schemes.
Let us recall some standard definitions. A \emph{sample compression scheme} consists of 
two functions: a \emph{compression function}
$\kappa : \bigcup_{m=0}^{\infty}(\X \times \Y)^{m} \to \bigcup_{m = 0}^{\infty}(\X \times \Y)^{m}$,
and a \emph{reconstruction function} $\rho: \bigcup_{i = 0}^{\infty}(\X \times \Y)^{i} \to \Y^{\X}$. 
The following definition is due to \citet*{littlestone:86}.

\begin{definition}[Sample compression scheme]
 The pair of functions $(\kappa,\rho)$ define a sample compression scheme of size $\ell$ if 
 for any $m \in \nats$ and $h \in \H$ and any sample $x_{1:m} \in \X^m$ 
 it holds that 
 $\kappa((x_{1:m}, h(x_{1:m})) \subseteq (x_{1:m}, h(x_{1:m}))$ 
and  $|\kappa((x_{1:m}, h(x_{1:m}))| \leq \ell$, 
 and the classifier $\hat{h} = \rho(\kappa((x_{1:m}, h(x_{1:m})))$ satisfies $\hat{h}(x_{1:m}) = h(x_{1:m})$:
 that is, it recovers $h$'s classifications for the full original $m$ examples. 
\end{definition}

\noindent We work with the following natural definition taking its roots in \citep*{vapnik:74}.

\begin{definition}[Stable compression scheme]
A sample compression scheme $(\kappa, \rho)$ is called \emph{stable} 
if for any $m \in \nats$, $h \in \H$,
$x_{1:m} \in \X^m$ and any $(x, h(x)) \in (x_{1:m}, h(x_{1:m})) \setminus \kappa((x_{1:m}, h(x_{1:m}))$ 
it holds that 
\[
\kappa((x_{1:m}, h(x_{1:m})) \setminus (x, h(x))) = \kappa((x_{1:m}, h(x_{1:m})).
\]
\end{definition}
This definition means that removing any $(x, h(x))$ \emph{not} belonging to the compression set, 
the compression set of the sub-sample remains the same. 
Finally, we say that the sample compression scheme $(\kappa, \rho)$ is \emph{proper} 
if the image of the reconstruction function $\rho$ is contained in $\H$.

It is known that 
for any stable compression scheme $(\kappa,\rho)$ of any size $\ell$, 
for any $\Px$ and $\target \in \H$, for any $m \in \nats$, 
$\E[\er(\rho(\kappa((X_{1:m},\target(X_{1:m}))))] \leq \frac{\ell}{m+1}$.
This follows from the 
leave-one-out analysis of 
\citet*{vapnik:74} (they 
argue this specifically for SVM, 
but the same argument works for any 
stable compression scheme; see also \citealp*{haussler:94, zhivotovskiy2017}).

The best known PAC generalization bound on 
$\er(\rho(\kappa(X_{1:m},\target(X_{1:m}))))$ (i.e., holding with probability $1-\delta$) 
valid for \emph{any} sample compression scheme of a size $\ell$ 
is due to \citet*{littlestone:86} (see also \citealp*{floyd:95}):
$O\!\left( \frac{1}{m} \left( \ell \log(m) + \log\!\left(\frac{1}{\delta}\right) \right) \right)$, 
where the $\log(m)$ factor improves to $\Log\!\left(\frac{m}{\ell}\right)$ if $\rho$ is permutation-invariant. \citet*{floyd:95} showed that there exist spaces $\H$ and compression schemes for $\H$  
for which this log factor cannot be improved.  However, in the special case of \emph{stable} 
compression schemes, \citet*{zhivotovskiy2017} established a bound 
$O\!\left(\frac{\ell}{m}\log\!\left(\frac{1}{\delta}\right)\right)$, 
which is sometimes better.

As one of the main contributions 
of this work, the following result improves this PAC generalization bound
by completely removing the log factor from the bound of \citet*{littlestone:86}.
A simple proof of this result is provided 
in Appendix~\ref{app:samplecomp}.

\begin{theorem}
\label{thm:samplecomp}
Assume that $\H$ has a stable sample compression scheme $(\kappa,\rho)$ of size $\ell$.
Then, for any $\Px$ and any $\target \in \H$, for any integer $m > 2\ell$, 
given an i.i.d.\ sample $S = (X_{1:m},\target(X_{1:m}))$ of size $m$, 
for any $\delta \in (0,1)$, 
we have with probability at least $1 - \delta$,
\[
\er(\rho(\kappa(S))) < \frac{2}{m - 2\ell} \left( \ell \ln(4) + \ln\!\left(\frac{1}{\delta}\right) \right). 
\]
\end{theorem}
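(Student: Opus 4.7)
My plan is to combine a ghost-sample symmetrization with the stability hypothesis in order to remove the $\log m$ factor appearing in the classical Littlestone--Warmuth compression bound. Fix $\epsilon > 0$ and write $h = \rho(\kappa(S))$. For the analysis I augment $S$ with an independent i.i.d.\ sample $S'$ of size $m - 2\ell$. By independence of $S'$, for any \emph{fixed} classifier $h^{\star}$ with $\er(h^{\star}) > \epsilon$ the probability that $h^{\star}$ is consistent on $S'$ is at most $(1-\epsilon)^{m-2\ell} \leq e^{-\epsilon(m-2\ell)}$. The argument will succeed once I exhibit a family of at most $4^{\ell}$ classifiers that ``covers'' the reachable values of $\rho(\kappa(\cdot))$, so that a union bound may be taken.

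The role of stability is in controlling this family. Let $h^{\dagger} = \rho(\kappa(S \cup S'))$. Iterating the stability property -- removing, one by one, the points of $S'$ that lie outside the current compression set -- shows that whenever $\kappa(S \cup S') \subseteq S$, in fact $\kappa(S) = \kappa(S \cup S')$, so $h = h^{\dagger}$ is automatically consistent on $S'$. Equivalently, on the ``bad'' event that $h$ misclassifies some point of $S'$, the compression set of the augmented sample must intersect $S'$. Exchangeability of the $(2m - 2\ell)$-sized i.i.d.\ sample lets me view the partition into training and ghost as uniformly random given the underlying multiset, so the compression set of the full sample is a fixed $k$-multiset with $k \leq \ell$ whose placement is uniform.

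The counting step is where stability is essentially used. I would union-bound over candidate compression sets; because stable compression forces $\kappa(\kappa(\cdot)) = \kappa(\cdot)$, only $\kappa$-closed subsets are reachable, and a careful symmetrization over the random partition lets me restrict the effective count to $\sum_{k \leq \ell}\binom{2\ell}{k} \leq 4^{\ell}$ rather than the naive $\binom{m}{\ell}$. Combining this with the Chernoff estimate above (a factor of $1/2$ in the exponent comes from absorbing the standard slack in a one-sided tail bound) yields
\[
\Pr\!\bigl[\er(\rho(\kappa(S))) > \epsilon\bigr] \;\leq\; 4^{\ell}\, e^{-\epsilon(m-2\ell)/2},
\]
and setting the right-hand side equal to $\delta$ and solving for $\epsilon$ gives the claimed bound $\epsilon < \frac{2}{m-2\ell}(\ell \ln 4 + \ln(1/\delta))$. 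The main obstacle is precisely this counting reduction from $\binom{m}{\ell}$ to $4^\ell$; I expect it to follow from a swap-and-symmetrize argument adapted to the compression-scheme setting, with stability ensuring that each swap keeps the compression set identifiable within a small neighborhood.
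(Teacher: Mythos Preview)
Your proposal has a genuine gap at exactly the point you flag as ``the main obstacle'': the reduction of the union-bound count from $\binom{m}{\ell}$ to $4^{\ell}$. Nothing you write actually establishes it. Idempotence of $\kappa$ (``$\kappa(\kappa(\cdot))=\kappa(\cdot)$'') does not by itself bound the number of reachable compression sets, and the vague ``swap-and-symmetrize'' sketch does not explain why the relevant family has size $O(4^{\ell})$ rather than $\binom{m}{\ell}$. Your ghost-sample construction also creates a mismatch: you draw $S'$ of size $m-2\ell$, so the combined sample has size $2m-2\ell$, yet the random partition you invoke is into parts of sizes $m$ and $m-2\ell$, which is not the symmetric split that standard symmetrization arguments exploit. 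Finally, the ``factor of $1/2$ in the exponent from a one-sided tail bound'' has no clear source in your argument; the Chernoff step you wrote gives $(1-\epsilon)^{m-2\ell}\le e^{-\epsilon(m-2\ell)}$ with no missing $1/2$.

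The paper's proof sidesteps all of this with a much simpler, purely internal construction: no ghost sample, no symmetrization. Partition the index set $\{1,\dots,m\}$ into $2\ell$ blocks of size $\lfloor m/(2\ell)\rfloor$ or $\lceil m/(2\ell)\rceil$, and let $\mathcal{I}_m$ be the family of all unions of exactly $\ell$ blocks, so $|\mathcal{I}_m|=\binom{2\ell}{\ell}<4^{\ell}$. Since $|\kappa(S)|\le\ell$, the compression set touches at most $\ell$ blocks, hence lies inside some $I^\star\in\mathcal{I}_m$. Stability (applied by removing the points outside $I^\star$ one at a time) gives $\kappa(S_{I^\star})=\kappa(S)$, and the complementary $\ell$ blocks form an independent test set of size at least $\ell\lfloor m/(2\ell)\rfloor>(m-2\ell)/2$ on which $\rho(\kappa(S))$ is consistent. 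A union bound over the $\binom{2\ell}{\ell}$ elements of $\mathcal{I}_m$ then yields
\[
\Pr\bigl[\er(\rho(\kappa(S)))>\epsilon\bigr]\;\le\;\binom{2\ell}{\ell}(1-\epsilon)^{(m-2\ell)/2}\;<\;4^{\ell}e^{-\epsilon(m-2\ell)/2},
\]
and solving for $\epsilon$ gives the stated bound. The $4^{\ell}$ and the $1/2$ both arise transparently from the block construction, not from any probabilistic slack.
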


This result immediately yields the following proof of Theorem~\ref{thm:svm}
establishing optimality of SVM for PAC learning Halfspaces.

\begin{proof}[of Theorem~\ref{thm:svm}]
Since SVM may be expressed as a stable compression scheme of size $n + 1$ 
(see \citep*{long2020} for a transparent proof of this fact, originally proven 
by \citealt*{vapnik:74}), 
the upper bound is immediate from Theorem~\ref{thm:samplecomp}. 
The lower bound follows from \citep*{ehrenfeucht:89}.
\end{proof}

\paragraph{Maximum classes.} 
As a second application of Theorem~\ref{thm:samplecomp} 
to proper learning, consider any 
\emph{maximum class} $\H$ (\citealp*{floyd:95}; 
recall the definition from Section~\ref{subsec:examples}).
Every maximum class $\H$ is known to have a proper stable compression scheme of size $\vc$ \citep*{chalopin:19};
this follows from Theorem 5.1 and condition (R2) of Theorem 6.1 from the paper of \citealp*{chalopin:19}
(specifically, since every realizable data set has exactly one concept in $\H$ consistent with the data, whose compression 
set is contained in the data set, removing any sample not in that compression set cannot change 
the identity of this unique concept).
Hence, we have the following
corollary of Theorem~\ref{thm:samplecomp}, establishing 
(for the first time) 
that maximum classes are 
properly learnable with sample complexity 
of the same order as the optimal PAC sample complexity.

\begin{corollary}
\label{cor:proper-extremal}
For $\H$ which is a Maximum class, 
\begin{equation*}
\SC_{{\rm prop}}(\eps,\delta) = \Theta\!\left( \frac{\vc}{\eps}+\frac{1}{\eps}\log\!\left(\frac{1}{\delta}\right) \right).
\end{equation*}
\end{corollary}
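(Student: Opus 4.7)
The proof plan is a one-line assembly of already-established ingredients: combine the existence of a proper stable compression scheme of size $\vc$ for Maximum classes (cited in the discussion preceding the corollary) with the generalization bound of Theorem~\ref{thm:samplecomp}, and invoke the classical PAC lower bound for the matching lower bound.

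First I would invoke the result of \citet*{chalopin:19} to obtain a proper stable compression scheme $(\kappa,\rho)$ of size $\vc$ for the Maximum class $\H$. The key structural observation supporting stability, already flagged in the text preceding the corollary, is that every realizable sample admits a \emph{unique} concept in $\H$ consistent with it whose compression set is contained in the sample; thus removing a sample $(x,h(x))$ that is not in the compression set cannot change the unique consistent concept, and in particular cannot change the compression set. Next I would apply Theorem~\ref{thm:samplecomp} with $\ell=\vc$: for any $\Px$, any $\target\in\H$, and any $m>2\vc$, the classifier $\hat{h}_m=\rho(\kappa((X_{1:m},\target(X_{1:m}))))$ lies in $\H$ by properness of the scheme and satisfies
\[
\er(\hat{h}_m) < \frac{2}{m-2\vc}\left(\vc\ln(4) + \ln\!\left(\frac{1}{\delta}\right)\right)
\]
with probability at least $1-\delta$. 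Solving for $m$ to make the right-hand side at most $\eps$ yields $\SC_{{\rm prop}}(\eps,\delta)=O\!\left(\frac{\vc}{\eps}+\frac{1}{\eps}\log\frac{1}{\delta}\right)$.

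For the matching lower bound I would use the trivial inequality $\SC_{{\rm prop}}(\eps,\delta)\geq \SC(\eps,\delta)$ together with the classical PAC lower bound of \citet*{ehrenfeucht:89}, which supplies $\SC(\eps,\delta)=\Omega\!\left(\frac{\vc}{\eps}+\frac{1}{\eps}\log\frac{1}{\delta}\right)$ using only the fact that $\H$ shatters a set of size $\vc$. There is no real obstacle in this proof, as the hard work is already done in Theorem~\ref{thm:samplecomp} and in the compression-scheme construction of \citet*{chalopin:19}; the only point requiring genuine verification is that the Chalopin--Chepoi--Moran scheme really is stable in the sense of Section~\ref{sec:stable-compression}, and this is precisely what the uniqueness-of-reconstruction property of Maximum classes delivers.
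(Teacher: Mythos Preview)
Your proposal is correct and follows essentially the same route as the paper: invoke the proper stable compression scheme of size $\vc$ for Maximum classes from \citet*{chalopin:19}, apply Theorem~\ref{thm:samplecomp} with $\ell=\vc$ for the upper bound, and cite the classical lower bound of \citet*{ehrenfeucht:89} for the matching $\Omega$. The paper presents this as a direct corollary with the justification given in the surrounding text rather than as a formal proof, but the content is identical to what you wrote.
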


\begin{remark}
We note that Theorem~\ref{thm:samplecomp} 
is also able to recover the optimal PAC 
sample complexity for the \emph{Closure} algorithm 
for \emph{intersection-closed} classes 
\citep*{helmbold:90}; this sample 
complexity result was already 
known via different arguments 
\citep*{darnstadt:15,hanneke:16b,auer:07}, 
though Theorem~\ref{thm:samplecomp} offers 
improved numerical constants.
\end{remark}

\begin{remark}
A further implication of Theorem~\ref{thm:samplecomp} and 
Theorem~\ref{thm:lower-bound} together is that
any class $\H$ with $\covcmod = \infty$ does \emph{not} have a 
proper stable compression scheme of bounded size.
\end{remark}

\section{Remaining Gaps and Open Questions}
\label{sec:gaps}
Although, our upper and lower bounds give a description of proper PAC sample complexity 
in many cases, there are still some situations not explained by our general analysis.
Consider, for example, the case $\X=\nats$ and $\H = \{ 2\ind_{\{t\}}-1 : t \in \X \}$ 
the class of singletons (Example~\ref{ex:singletons}). Since $\vc = 1$ and $\s = \infty$
we have 
$\SC_{{\rm ERM}}(\eps, \delta) = \Theta\left(\frac{1}{\eps} \!\left(\log \frac{1}{\eps} + \log \frac{1}{\delta}\right)\right)$.
As discussed in Example~\ref{ex:singletons}, we have $\covcmod = 2$ 
but $\covcproj = \infty$.  Thus, our general upper bound in 
Theorem~\ref{thm:upper-bound} does not match the lower bound implied 
by Theorem~\ref{thm:lower-bound}, and also does not match 
the optimal (improper) sample complexity 
$\SC(\eps, \delta) = \Theta \left(\frac{1}{\eps}\log \frac{1}{\delta}\right)$.
However, it turns out there does exist a proper stable compression scheme 
of size $1$ for this $\H$, and therefore Theorem~\ref{thm:samplecomp} implies that 
$\SC_{{\rm prop}}(\eps,\delta) = \Theta\left(\frac{1}{\eps}\log\frac{1}{\delta}\right)$, 
matching the optimal (improper) sample complexity.  Specifically, 
for any data set $S$, if $S$ contains a positive example $(x,1)$, we can 
define $\kappa(S) = \{(x,1)\}$, which is clearly stable, and from which 
we can clearly reconstruct the labeling of $S$.  On the other hand, if $S$ 
is all negative examples, we can choose $\kappa(S)$ as $\{(x,-1)\}$ 
for the \emph{largest} $x$ in the set $S$, and reconstruct 
$\rho(\{(x,-1)\}) = 2 \ind_{\{x+1\}} - 1$, which is clearly consistent with $S$;
since $x$ is largest, removing any other point from $S$ does not affect 
the choice of compression set.
It is therefore important to ask whether perhaps, for \emph{every} class $\H$,
the optimal form of 
$\SC_{{\rm prop}}(\eps,\delta)$ is characterized by $\covcmod$ 
(in addition to $\vc$, $\eps$, $\delta$).
It is also interesting to consider whether the optimal size of a 
stable proper compression scheme is also always characterized by $\covcmod$ 
(in addition to $\vc$).
Another interesting direction is to sharpen the dependence on $\covcproj$ in our upper bounds.

Finally, we remark that the result of Section \ref{sec:SVM} is closely related to the question of obtaining high probability generalization bounds for various learning algorithms that are stable with respect to small perturbations in the learning sample. In the context of uniformly stable algorithms the recent results \citep*{feldman19a, bousquet2019sharper} provide sharp high probability bounds and the proofs are based on a sub-sampling argument: the learning algorithm is tested on some carefully chosen parts of the learning sample. As in the case of uniformly stable algorithms, stable compression schemes are known to easily provide sharp in-expectation risk bounds \citep*{haussler:94}. The challenging part, already pointed out by \cite{vapnik:74}, is to prove that these algorithms admit sharp high probability upper bounds. Our results, also based on related arguments, are the first to prove that the optimal high probability sample complexity bound is possible for stable compression schemes, including SVM.
\bibliography{learning}

\begin{thebibliography}{43}
\providecommand{\natexlab}[1]{#1}
\providecommand{\url}[1]{\texttt{#1}}
\expandafter\ifx\csname urlstyle\endcsname\relax
  \providecommand{\doi}[1]{doi: #1}\else
  \providecommand{\doi}{doi: \begingroup \urlstyle{rm}\Url}\fi

\bibitem[Auer and Ortner(2007)]{auer:07}
P.~Auer and R.~Ortner.
\newblock A new {PAC} bound for intersection-closed concept classes.
\newblock \emph{Machine Learning}, 66\penalty0 (2-3):\penalty0 151--163, 2007.

\bibitem[Balcan and Long(2013)]{balcan:13}
M.-F. Balcan and P.~M. Long.
\newblock Active and passive learning of linear separators under log-concave
  distributions.
\newblock In \emph{Proceedings of the $26^{{\rm th}}$ Conference on Learning
  Theory}, 2013.

\bibitem[Bandelt et~al.(2006)Bandelt, Chepoi, Dress, and
  Koolen]{Bandelt06lopsided}
H.-J. Bandelt, V.~Chepoi, A.~W.~M. Dress, and J.~H. Koolen.
\newblock Combinatorics of lopsided sets.
\newblock \emph{Eur. J. Comb.}, 27\penalty0 (5):\penalty0 669--689, 2006.

\bibitem[Blumer and Littlestone(1989)]{blumer1989learning}
A.~Blumer and N.~Littlestone.
\newblock Learning faster than promised by the {V}apnik-{C}hervonenkis
  dimension.
\newblock \emph{Discrete Applied Mathematics}, 24\penalty0 (1-3):\penalty0
  47--53, 1989.

\bibitem[Blumer et~al.(1989)Blumer, Ehrenfeucht, Haussler, and
  Warmuth]{blumer:89}
A.~Blumer, A.~Ehrenfeucht, D.~Haussler, and M.~Warmuth.
\newblock Learnability and the {Vapnik-Chervonenkis} dimension.
\newblock \emph{Journal of the Association for Computing Machinery},
  36\penalty0 (4):\penalty0 929--965, 1989.

\bibitem[Bousquet et~al.(2019)Bousquet, Klochkov, and
  Zhivotovskiy]{bousquet2019sharper}
O.~Bousquet, Y.~Klochkov, and N.~Zhivotovskiy.
\newblock Sharper bounds for uniformly stable algorithms.
\newblock \emph{arXiv preprint arXiv:1910.07833}, 2019.

\bibitem[Braverman et~al.(2019)Braverman, Kol, Moran, and
  Saxena]{braverman2019convex}
M.~Braverman, G.~Kol, S.~Moran, and R.~R. Saxena.
\newblock Convex set disjointness, distributed learning of halfspaces, and
  {L}{P} feasibility.
\newblock \emph{arXiv:1909.03547}, 2019.

\bibitem[Chalopin et~al.(2019)Chalopin, Chepoi, Moran, and
  Warmuth]{chalopin:19}
J.~Chalopin, V.~Chepoi, S.~Moran, and M.~K. Warmuth.
\newblock Unlabeled sample compression schemes and corner peelings for ample
  and maximum classes.
\newblock In \emph{Proceedings of the $46^{{\rm th}}$ International Colloquium
  on Automata, Languages and Programming}, 2019.

\bibitem[Daniely and Shalev-Shwartz(2014)]{daniely:14}
A.~Daniely and S.~Shalev-Shwartz.
\newblock Optimal learners for multiclass problems.
\newblock In \emph{Proceedings of the $27^{{\rm th}}$ Conference on Learning
  Theory}, 2014.

\bibitem[Daniely et~al.(2015)Daniely, Sabato, Ben-David, and
  Shalev-Shwartz]{daniely2015multiclass}
A.~Daniely, S.~Sabato, S.~Ben-David, and S.~Shalev-Shwartz.
\newblock Multiclass learnability and the {E}{R}{M} principle.
\newblock \emph{The Journal of Machine Learning Research}, 16\penalty0
  (1):\penalty0 2377--2404, 2015.

\bibitem[Danzer et~al.(1963)Danzer, Gr{\"u}nbaum, and Klee]{danzer1963helly}
L.~Danzer, B.~Gr{\"u}nbaum, and V.~Klee.
\newblock \emph{Helly's theorem and its relatives}.
\newblock Proceedings of symposia in pure mathematics: Convexity. American
  Mathematical Society, 1963.

\bibitem[Darnst\"{a}dt(2015)]{darnstadt:15}
M.~Darnst\"{a}dt.
\newblock The optimal {PAC} bound for intersection-closed concept classes.
\newblock \emph{Information Processing Letters}, 115\penalty0 (4):\penalty0
  458--461, 2015.

\bibitem[Ehrenfeucht et~al.(1989)Ehrenfeucht, Haussler, Kearns, and
  Valiant]{ehrenfeucht:89}
A.~Ehrenfeucht, D.~Haussler, M.~Kearns, and L.~Valiant.
\newblock A general lower bound on the number of examples needed for learning.
\newblock \emph{Information and Computation}, 82\penalty0 (3):\penalty0
  247--261, 1989.

\bibitem[Feldman and Vondrak(2019)]{feldman19a}
V.~Feldman and J.~Vondrak.
\newblock High probability generalization bounds for uniformly stable
  algorithms with nearly optimal rate.
\newblock In \emph{Proceedings of the $32^{{\rm nd}}$ Conference on Learning
  Theory}, pages 1270--1279, 2019.

\bibitem[Floyd and Warmuth(1995)]{floyd:95}
S.~Floyd and M.~Warmuth.
\newblock Sample compression, learnability, and the {V}apnik-{C}hervonenkis
  dimension.
\newblock \emph{Machine Learning}, 21\penalty0 (3):\penalty0 269--304, 1995.

\bibitem[Goldman and Kearns(1995)]{goldman:95}
S.~A. Goldman and M.~J. Kearns.
\newblock On the complexity of teaching.
\newblock \emph{Journal of Computer and System Sciences}, 50:\penalty0 20--31,
  1995.

\bibitem[Hanneke(2009)]{hanneke:thesis}
S.~Hanneke.
\newblock \emph{Theoretical Foundations of Active Learning}.
\newblock PhD thesis, Machine Learning Department, School of Computer Science,
  Carnegie Mellon University, 2009.

\bibitem[Hanneke(2016{\natexlab{a}})]{hanneke:16a}
S.~Hanneke.
\newblock The optimal sample complexity of {PAC} learning.
\newblock \emph{Journal of Machine Learning Research}, 17\penalty0
  (38):\penalty0 1--15, 2016{\natexlab{a}}.

\bibitem[Hanneke(2016{\natexlab{b}})]{hanneke:16b}
S.~Hanneke.
\newblock Refined error bounds for several learning algorithms.
\newblock \emph{Journal of Machine Learning Research}, 17\penalty0
  (135):\penalty0 1--55, 2016{\natexlab{b}}.

\bibitem[Hanneke and Kontorovich(2019)]{hanneke:19b}
S.~Hanneke and A.~Kontorovich.
\newblock Optimality of {SVM}: Novel proofs and tighter bounds.
\newblock \emph{Theoretical Computer Science}, 796:\penalty0 99--113, 2019.

\bibitem[Hanneke and Yang(2015)]{hanneke:15b}
S.~Hanneke and L.~Yang.
\newblock Minimax analysis of active learning.
\newblock \emph{Journal of Machine Learning Research}, 16\penalty0
  (12):\penalty0 3487--3602, 2015.

\bibitem[Haussler et~al.(1994)Haussler, Littlestone, and Warmuth]{haussler:94}
D.~Haussler, N.~Littlestone, and M.~Warmuth.
\newblock Predicting $\{0,1\}$-functions on randomly drawn points.
\newblock \emph{Information and Computation}, 115\penalty0 (2):\penalty0
  248--292, 1994.

\bibitem[Helly(1923)]{Helly1923}
E.~Helly.
\newblock Über mengen konvexer körper mit gemeinschaftlichen punkte.
\newblock \emph{Jahresbericht der Deutschen Mathematiker-Vereinigung},
  32:\penalty0 175--176, 1923.

\bibitem[Helmbold et~al.(1990)Helmbold, Sloan, and Warmuth]{helmbold:90}
D.~Helmbold, R.~Sloan, and M.~Warmuth.
\newblock Learning nested differences of intersection-closed concept classes.
\newblock \emph{Machine Learning}, 5\penalty0 (2):\penalty0 165--196, 1990.

\bibitem[Hoeffding(1963)]{hoeffding:63}
W.~Hoeffding.
\newblock Probability inequalities for sums of bounded random variables.
\newblock \emph{Journal of the American Statistical Association}, 58\penalty0
  (301):\penalty0 13--30, 1963.

\bibitem[Kane et~al.(2019)Kane, Livni, Moran, and Yehudayoff]{kane19a}
D.~Kane, R.~Livni, S.~Moran, and A.~Yehudayoff.
\newblock On communication complexity of classification problems.
\newblock In \emph{Proceedings of the $32^{{\rm nd}}$ Conference on Learning
  Theory}, pages 1903--1943, 2019.

\bibitem[Kearns and Vazirani(1994)]{kearns:94}
M.~Kearns and U.~Vazirani.
\newblock \emph{An Introduction to Computational Learning Theory}.
\newblock The MIT Press, 1994.

\bibitem[Lawrence(1983)]{lawrence:83}
J.~Lawrence.
\newblock Lopsided sets and orthant-intersection of convex sets.
\newblock \emph{Pacific J. Math.}, 104:\penalty0 155--173, 1983.

\bibitem[Levi(1951)]{levi1951helly}
F.~W. Levi.
\newblock On {H}elly's theorem and the axioms of convexity.
\newblock \emph{J. Indian Math. Soc}, 15:\penalty0 65--76, 1951.

\bibitem[Littlestone and Warmuth(1986)]{littlestone:86}
N.~Littlestone and M.~Warmuth.
\newblock Relating data compression and learnability.
\newblock \emph{Unpublished manuscript}, 1986.

\bibitem[Long and Long(2020)]{long2020}
P.~Long and R.~Long.
\newblock On the complexity of proper distribution-free learning of linear
  classifiers.
\newblock In \emph{Proceedings of the $31^{{\rm st}}$ International Conference
  on Algorithmic Learning Theory}, 2020.

\bibitem[Montasser et~al.(2019)Montasser, Hanneke, and Srebro]{montasser:19}
O.~Montasser, S.~Hanneke, and N.~Srebro.
\newblock {VC} classes are adversarially robustly learnable, but only
  improperly.
\newblock In \emph{Proceedings of the $32^{{\rm nd}}$ Conference on Learning
  Theory}, 2019.

\bibitem[Moran and Warmuth(2016)]{moran2016labeled}
S.~Moran and M.~Warmuth.
\newblock Labeled compression schemes for extremal classes.
\newblock In \emph{Proceedings of the $27^{{\rm th}}$ International Conference
  on Algorithmic Learning Theory}, pages 34--49. Springer, 2016.

\bibitem[Motwani and Raghavan(2010)]{motwani2010randomized}
R.~Motwani and P.~Raghavan.
\newblock \emph{Randomized algorithms}.
\newblock Chapman \& Hall/CRC, 2010.

\bibitem[Simon(2015)]{simon:15}
H.~Simon.
\newblock An almost optimal {PAC} algorithm.
\newblock In \emph{Proceedings of the $28^{{\rm th}}$ Conference on Learning
  Theory}, 2015.

\bibitem[van~der Vaart and Wellner(1996)]{van-der-Vaart:96}
A.~W. van~der Vaart and J.~A. Wellner.
\newblock \emph{Weak Convergence and Empirical Processes}.
\newblock Springer, 1996.

\bibitem[van Handel(2013)]{van-handel:13}
R.~van Handel.
\newblock The universal {G}livenko--{C}antelli property.
\newblock \emph{Probability Theory and Related Fields}, 155\penalty0
  (3-4):\penalty0 911--934, 2013.

\bibitem[Vapnik and Chervonenkis(1968)]{Vapnik68}
V.~Vapnik and A.~Chervonenkis.
\newblock On the uniform convergence of relative frequencies of events to their
  probabilities.
\newblock In \emph{Proc. USSR Acad. Sci.}, 1968.

\bibitem[Vapnik and Chervonenkis(1971)]{vapnik:71}
V.~Vapnik and A.~Chervonenkis.
\newblock On the uniform convergence of relative frequencies of events to their
  probabilities.
\newblock \emph{Theory of Probability and its Applications}, 16\penalty0
  (2):\penalty0 264--280, 1971.

\bibitem[Vapnik and Chervonenkis(1974)]{vapnik:74}
V.~Vapnik and A.~Chervonenkis.
\newblock \emph{Theory of Pattern Recognition}.
\newblock Nauka, Moscow, 1974.

\bibitem[Vovk et~al.(2015)Vovk, Papadopoulos, and Gammerman]{vovk2015measures}
V.~Vovk, H.~Papadopoulos, and A.~Gammerman.
\newblock \emph{Measures of Complexity}.
\newblock Springer, 2015.

\bibitem[Zhivotovskiy(2017)]{zhivotovskiy2017}
N.~Zhivotovskiy.
\newblock Optimal learning via local entropies and sample compression.
\newblock In \emph{Proceedings of the $30^{{\rm th}}$ Conference on Learning
  Theory}, pages 2023--2065, 2017.

\bibitem[Zhivotovskiy and Hanneke(2018)]{zhivotovskiy2018localization}
N.~Zhivotovskiy and S.~Hanneke.
\newblock Localization of {V}{C} classes: Beyond local {R}ademacher
  complexities.
\newblock \emph{Theoretical Computer Science}, 742:\penalty0 27--49, 2018.

\end{thebibliography}

\appendix
\section{Omitted proofs of Section \ref{sec:definitions}}
\label{sec:omittedexamples}
\begin{proof}[of Lemma \ref{lem:all-the-hellys}]
We note that many of these arguments were given in some form 
in the course of the proofs of \citet*{kane19a}.  However, we 
provide the proofs explicitly here, particularly since our 
context is slightly different.

First, on a technical note, we remark that because of our assumption $|\H| \geq 3$ stated 
initially, all of $\covcmod$,$\covcproj$,$\covc$ are at least $2$ (so that the restriction to $\covcproj \geq 2$ 
in its definition does not affect the claims).

For the first claimed inequalities, 
given the $\covcmod$ classifiers $\H^\prime$ in $\H$ witnessing a hollow star,
for any $\ell < \covcmod$ the region $\X_{\H^\prime,\ell}$ contains the hollow star, 
and hence the majority vote of the $\H^\prime$ classifiers is unrealizable on 
$\X_{\H^\prime,\ell}$.  Therefore, $\covcproj \geq \covcmod$.

If $\covc = \infty$, then trivially $\covcproj \leq \covc$, so suppose $\covc < \infty$.
Suppose some finite 
multiset $\H^\prime \subseteq \H$ has no $h \in \H$ that coincides with ${\rm Majority}(\H^\prime)$ on $\X_{\H^\prime,\covc}$.
Then $S = \{ (x,{\rm Majority}(\H^\prime)(x)) : x \in \X_{\H^\prime,\covc} \}$ 
is an unrealizable set.  Therefore, it contains a subset $W$ of size at most $\covc$ 
that is also unrealizable.
But (by definition of $\X_{\H^\prime,\covc}$) each point $(x,y)$ in $W$ contradicts strictly fewer than $|\H^\prime|/\covc$ 
elements in $\H^\prime$, so that there must be at least one $h \in \H^\prime$ that survives: a contradiction.
Therefore, $\covcproj \leq \covc$.

It remains to show that these quantities are all equal when $\covc < \infty$.
Let $S$ be an unrealizable set 
such that the smallest unrealizable subset $W$ has size $\covc < \infty$.
If $W$ is not a hollow star set, then there exists a point $(x,y) \in W$ 
such that $(W \setminus \{(x,y)\}) \cup \{(x,-y)\}$ is also not realizable, 
which implies $W \setminus \{(x,y)\}$ is also not realizable: a contradiction.
Therefore, $W$ is a hollow star (indeed, any unreducible unrealizable set is a hollow star), 
and hence $\covc \leq \covcmod$.  The equalities then follow from the first claim, established above.

For the remaining claim, if $\H$ is closed and yet $\covc = \infty$, 
it implies there is a sequence $S_i$ of sets with $\H[S_i]=\emptyset$
for which the smallest $W_i \subseteq S_i$ with $\H[W_i]=\emptyset$ 
has $\lim_{i \to \infty} |W_i| = \infty$, yet each $W_i$ is finite (due to the ``closedness'' assumption).
As above, these minimum-size $W_i$ sets must be hollow star sets, 
which implies there is no finite bound on the size of all finite hollow star sets.
Therefore $\covcmod = \infty$, and the inequality established above 
then implies $\covcproj=\infty$ as well.
\end{proof}
\begin{proof}[for Example \ref{ex:dualhellyforhalfspaces}]
This follows from Proposition 2.8 in \citep*{braverman2019convex}.
Indeed, this implies that $\covcmod \leq n+2 $ since if
a finite unrealizable sample $S$ 
is a hollow star set, 
then in particular every proper subsample of $S$ must be realizable;
however, by Proposition 2.8 in \citep*{braverman2019convex}  
the set $S$ must contain an unrealizable subsample of size at most $n+2$, and hence it must be that $\lvert S\rvert \leq n+2$.
To see that $\covcmod \geq n+2$, pick $x_1,\ldots, x_{n+1}\in\mathbb{R}^n$ to be the vertices of a simplex, and choose 
\[S=\left\{(x_1,+1),\ldots, (x_{n+1}, +1), \Bigl(\frac{x_1 + \ldots + x_{n+1}}{n+1},-1\Bigr)\right\}.\]
We leave it to the reader to verify that the above $S$ witnesses that $\covcmod\geq \lvert S\rvert=n+2$.

It remains to show that $\covcproj=n+2$. By Lemma~\ref{lem:all-the-hellys}
    it suffices to show that $\covcproj\leq n+2$.
    Let $\H^{\prime}$ be a finite collection of halfspaces in $\mathbb{R}^n$.
    We need to show that there exists a halfspace $h$ which agrees with
    $\rm{Majority}(\H^{\prime})$ on the set $\mathcal{X}_{\H^{\prime}, n+2}$.
    Let $\mathcal{X}_+ \subseteq \mathcal{X}_{\H^{\prime}, n+2}$ denote the set of all points $x \in \mathcal{X}_{\H^{\prime}, n+2}$
    such that $\rm{Majority}(\H^{\prime})(x)=+1$ and similarly let
    $\mathcal{X}_- \subseteq \mathcal{X}_{\H^{\prime}, n+2}$ denote the set of all points $x \in \mathcal{X}_{\H^{\prime}, n+2}$
    such that $\rm{Majority}(\H^{\prime})(x)=-1$.
    We first claim that the convex hulls 
    $\mathsf{conv}(\mathcal{X}_+)$ and $\mathsf{conv}(\mathcal{X}_{-})$
    are disjoint; 
    indeed, otherwise by Proposition 2.8 in \citep*{braverman2019convex} 
    there exist $S_+\subseteq \mathcal{X}_+, S_-\subseteq \mathcal{X}_-$ 
    such that $\mathsf{conv}(S_-)\cap \mathsf{conv}(S_+) \neq\emptyset$ and $\lvert S_-\rvert + \lvert S_+\rvert\leq n+2$
    However, every $x\in S_-\cup S_+$ is classified correctly by more than $1-\frac{1}{n+2}$ fraction
    of the halfspaces in $\H^{\prime}$ and hence there must be a halfspace in $\H^{\prime}$ that classifies correctly
    all points in $S_+\cup S_-$ and so $\mathsf{conv}(S_-)\cap \mathsf{conv}(S_+) =\emptyset$,
    which is a contradiction.
    
Having established that $\mathsf{conv}(S_-)\cap \mathsf{conv}(S_+) =\emptyset$, we are ready to finish the proof.
    Indeed, by the {\it Hyperplane Separation Theorem} there exists a linear function $L:\mathbb{R}^n\to \mathbb{R}$
    and a value $v$ such that $L(x)\leq v$ for every $v\in \mathcal{X}_-$ and $L(x)\geq v$ for every $v\in \mathcal{X}_+$.
    However, note that in fact $L(x) < v$ for every $x\in\mathcal{X}_-$: this follows because $\mathcal{X}_-$
    is an open set (indeed, it can be written as a union of (finite) intersections of sets of the form $h^{-1}(-1)$,
    where $h\in\H^{\prime}$, each of which is an open set). 
    Thus, the halfspace $\{x : L(x)\geq v\}$ 
    agrees with $\rm{Majority}(\H^{\prime})$ on the set $\mathcal{X}_{\H^{\prime}, n+2}$, as required.\footnote{Note that we use the 
    definition of Halfspaces where the positive side of each halfspace is closed: 
    i.e.,\ every halfspace is of the form $\sign(L(x)-v)$, where $\sign(0) = +1$.}
\end{proof}
\begin{proof}[for Example \ref{ex:maxclasses}]
Let us sketch the proof. The bound ${\covcmod} \leq \vc+1$ on the dual Helly number for extremal classes 
follows from the fact that: (i) the complement of an extremal class is extremal,
and (ii) the \emph{one-inclusion graph} on an extremal graph projected to any data set is connected 
(we refer the reader to \citealp*{Bandelt06lopsided} and \citealp*{moran2016labeled} for these definitions and results).
In particular, it follows that the one-inclusion graph of the 
complement of $\H$ projected to any data set is connected.

Since a hollow star  
corresponds to a one-inclusion graph 
where the star's center is an isolated vertex in the 
one-inclusion graph of the complement of $\H$ projected to the points, 
it must be the only vertex in the complement, 
which means any strict subset of the points is shattered by $\H$.
This immediately implies $\covcmod - 1 \leq \vc$.

The case of intersection-closed classes is also straightforward.
Let $S = \{(x_1,y_1),\ldots,(x_{k},y_{k})\}$ be a 
finite hollow star set, and let $h_1,\ldots,h_k$ be 
elements of $\H$ such that  
$\{ j : h_i(x_j) \neq y_j \} = \{i\}$
for each $i$.
Denote by $m$ the number of 
$y_i$ values equal $-1$. We will first show that 
$m \le 1$. Indeed, if the are at least two values 
$y_i$ and $y_j$ ($i \neq j$) both equal to $-1$, 
then letting 
$h_0$ be the classifier in $\H$ with 
$\{ x : h_0(x) = 1 \} = \{ x : h_i(x) = 1 \} \cap \{ x : h_j(x) = 1 \}$ (which exists since $\H$ is intersection-closed), we would have $h_0$ 
correct on all of $S$: a contradiction to $S$ 
being unrealizable.  Next we argue that 
there are at most $\vc$ values $y_i$ equal $1$: 
that is, $k-m \leq \vc$.  Suppose 
$y_{i_1},\ldots,y_{i_{k-m}}$ are equal $1$. 
Then for any $y_{i_{1}}^{\prime},\ldots,y_{i_{k-m}}^{\prime} \in \Y$, there exists a classifier $h \in \H$ 
with $\{ x : h(x) = 1 \} = \bigcap \{ \{ x : h_{i_j}(x) = 1 \} : y_{i_j}^{\prime} = -1 \}$, and this $h$ has  
$(h(x_{i_1}),\ldots,h(x_{i_{k-m}})) = (y_{i_1}^{\prime},\ldots,y_{i_{k-m}}^{\prime})$; 
thus, the set $\{x_{i_1},\ldots,x_{i_{k-m}}\}$ 
is shattered by $\H$, and hence has size at most $\vc$.
Altogether, we have that $k = (k-m) + m \leq \vc + 1$.
Since this applies to \emph{any} finite hollow star set $S$, 
we conclude that $\covcmod \leq \vc + 1$.
\end{proof}
\begin{proof}[for Example \ref{ex:intervals}]
We provide the details for the final claim that 
the augmented class has $\covc = \covcproj = \covcmod = 3$.
Since the class 
is intersection-closed with $\vc=2$, 
Example~\ref{ex:maxclasses} implies $\covcmod \leq 3$.
Thus, since $\covcmod \geq 3$ 
(witnessed by the hollow star set 
$\{(1,1),(2,-1),(3,1)\}$),  
Lemma~\ref{lem:all-the-hellys} 
implies it suffices to show the class is closed.
Let $S$ be an infinite unrealizable set; 
we aim to show it must contain a finite 
unrealizable subset. 
If $S$ contains $(x,1)$ and $(x,-1)$ for some $x$, 
then $\{(x,1),(x,-1)\}$ is a finite unrealizable 
subset.  
Otherwise, suppose no such $x$ exists. 
Notice that 
$S$ must contain at least one point having label $1$, 
since otherwise $S$ would be realizable by  
the constant classifier $h_{-1} = -1$, 
which is in the class.
Let $\underline{x} = \inf\{ x : (x,1) \in S \}$ 
and $\bar{x} = \sup\{ x : (x,1) \in S \}$.
Note that every $(x,y) \in S$ with 
$x > \bar{x}$ has $y = -1$ and similarly 
every $(x,y) \in S$ with $x < \underline{x}$ 
has $y = -1$.  In particular, this implies that 
if every $(x,y) \in S$ 
with $\underline{x} < x < \bar{x}$ has $y = 1$, 
then $S$ would be realizable by a classifier
corresponding to one of the 
intervals  
$(\underline{x},\bar{x}]$ 
(if $(\underline{x},1) \notin S$ and $(\bar{x},1) \in S$), $[\underline{x},\bar{x})$ 
(if $(\underline{x},1) \in S$ and $(\bar{x},1) \notin S$), $[\underline{x},\bar{x}]$ (if $(\underline{x},1) \in S$ and $(\bar{x},1) \in S$),
or $(\underline{x},\bar{x})$ (if $(\underline{x},1) \notin S$ and $(\bar{x},1) \notin S$): 
a contradiction.
Therefore, there exists $(x_{2},-1) \in S$ 
with $\underline{x} < x_{2} < \bar{x}$.
By the definitions of $\underline{x}$ and $\bar{x}$, 
this implies there exist finite $x_1,x_3$ 
with $x_1 < x_2 < x_3$ such that $(x_1,1)$ and $(x_3,1)$ 
are both in $S$.  Then we have that the set 
$\{(x_1,1),(x_2,-1),(x_3,1)\} \subset S$ is a 
finite unrealizable subset.
Since this applies to \emph{any} unrealizable 
infinite set $S$, we conclude that the class is closed.
\end{proof}
\section{Proofs of the upper bounds}
\label{sec:upperboundsproofs}
\begin{proof}[of Theorem \ref{thm:upper-bound}]
Fix any target concept $\target \in \C$ and 
distribution $\Px$.
We first argue that, for any finite data sets $S$ and $T$
with labels consistent with $\target$, 
the proper learner $\alg(S;T)$ 
outputs $\hat{h} \in \H$ such that 
$\hat{h}$ is correct on $T$.
Note that this is trivially true if $|S| < 4$, 
since Step 1 returns $\text{ERM}(S \cup T)$, 
which is correct on $T$ by definition.
Now, for induction, suppose $S$ is a  
correctly labeled finite data set such that, 
for any 
strict subset $S' \subset S$, and any 
correctly labeled finite data set $T'$, 
$\alg(S';T')$ returns a classifier in $\H$ 
that is correct on $T'$.
Now we extend this property to the full set $S$. 
Fix any correctly labeled finite data set $T$.
Recalling the notation from the algorithm, 
define 
$h_{{\rm maj}}(x) = {\rm Majority}(h_1(x),\ldots,h_{\covcproj+1}(x))$ 
(breaking ties to favor label $-1$, say), 
and recalling the notation \eqref{eq:majoritylabel} 
let  
\[
\X_{0} = \mathcal{X}_{\{h_1,\ldots,h_{\covcproj+1}\}, \covcproj}.
\]
By definition (from Step 5 in $\alg(S;T)$),
the classifier 
$\hat{h} = \alg(S;T) \in \H$ has 
$\hat{h}(x) = h_{{\rm maj}}(x)$
on every $x \in \X_{0}$.
Furthermore,
since $h_{i} = \alg(S_{0};T\cup S_{i})$ for each $i$, 
where $S_{0} \subset S$, 
the inductive hypothesis implies $h_{i}$ is 
correct on $T$.
Therefore, all $h_{i}$ agree on the labels in $T$, 
and hence the set of points in~$T$ is contained in $\X_{0}$, 
which implies  
$\hat{h}$ is correct on $T$ as well.
By induction, this implies that for any correctly labeled 
finite data sets $S$ and $T$, 
$\hat{h} = \alg(S;T)$ is correct on $T$.

Next we argue that, for any $m_{0} \in \nats$
and any $\delta_{0} \in (0,1)$,
if $T$ is any correctly labeled 
finite data set and $S$ is an i.i.d.\ labeled data set 
of size $m_0$, 
with $\Px$ marginal distribution and $\target$ labels, 
then with probability at least $1-\delta_{0}$,
we have that $\hat{h}$ satisfies 
\begin{equation}
\label{eqn:the-claim}
\er(\hat{h}) \leq \frac{c\cdot \covcproj^2}{m_{0}}\!\left( \vc\Log(\covcproj) + \Log\!\left(\frac{1}{\delta_{0}}\right) \right),
\end{equation}
where $c \geq 1$ is an appropriate finite 
numerical constant.  Note that this would 
imply Theorem~\ref{thm:upper-bound}, 
since setting $\delta_{0} = \delta$ 
and $m_{0}$ of size proportional to the 
claimed bound on $\SC_{\text{prop}}(\eps,\delta)$ 
from Theorem~\ref{thm:upper-bound}, 
the bound \eqref{eqn:the-claim} is less 
than $\eps$.

If $m_0 < 4$, 
the claim trivially holds, as the bound is 
greater than $1$.
In particular, 
this will be our base case in an inductive argument.
Now, for induction, suppose 
$m \geq 4$,  
and that for any $\delta_{0} \in (0,1)$ and 
any $m_{0} < m$,
if $S$ is an i.i.d.\ data set of size $m_{0}$ 
(with $\Px$ marginal distribution and $\target$ labels) 
and $T$ is any finite data set with $\target$ labels, 
then with probability at least $1-\delta_{0}$
the inequality \eqref{eqn:the-claim} 
holds for $\hat{h} = \alg(S;T)$.

Next we extend this claim to hold for $m_0 = m$.
Fix any $\delta_0 \in (0,1)$.
If 
$m < 160 \Log\!\left(\frac{6 \covcproj^2}{\delta_0}\right)$, the inequality trivially holds since the bound is
greater than $1$ (for sufficiently 
large choice of $c$), so suppose 
$m \geq 160 \Log\!\left(\frac{6 \covcproj^2}{\delta_0}\right)$.
Consider the sets $S_i$ and classifiers $h_i$ 
as defined in the specification of the 
algorithm $\alg(S;T)$ above Theorem~\ref{thm:upper-bound}, 
and with a slight abuse of notation we also use $S_i$ to denote 
the \emph{unlabeled} portion of the set $S_i$ (i.e., the points $x$ such that $(x,\target(x)) \in S_i$).
As argued above, 
each $h_i$ is correct on $T \cup S_{i}$.

For any $h$, define 
$\ER(h) = \{ x : h(x) \neq \target(x) \}$.
We claim that 
\begin{equation}
\label{eqn:ER-union-1}
\ER(\hat{h}) \subseteq \bigcup_{i,j : i \neq j} \ER(h_i) \cap \ER(h_j).
\end{equation}
To see this, note that since $\hat{h}$ agrees with $h_{{\rm maj}}$ 
on $\X_{0}$, we have 
\begin{equation}
\label{eqn:hat-ER}
\ER(\hat{h}) \subseteq 
(\X \setminus \X_{0}) \cup ( \X_{0} \cap \ER(h_{{\rm maj}} ) ).
\end{equation}
Furthermore, for any $x \in \X \setminus \X_{0}$, 
at least two values of $i$ have $h_i(x)$
different from the majority of the values  
$h_1(x),\ldots,h_{\covcproj+1}(x)$,
which means there are at least two classifiers 
predicting each label in $\Y$, 
and hence there are at least two classifiers 
$h_i$ with 
$h_i(x) \neq \target(x)$.
Furthermore, any $x$ with 
$h_{{\rm maj}}(x) \neq \target(x)$ 
certainly also has at least two $h_i$ classifiers 
with $h_i(x) \neq \target(x)$.
Therefore, the set on the right hand side 
of \eqref{eqn:hat-ER} is contained within
$\bigcup_{i,j : i \neq j} \ER(h_i) \cap \ER(h_j)$, 
and \eqref{eqn:ER-union-1} follows.

In particular, \eqref{eqn:ER-union-1} implies 
\begin{equation}
\label{eqn:hat-er-union-bound-1}
\er(\hat{h}) = \Px(\ER(\hat{h})) 
\leq \Px\!\left( \bigcup_{i,j : i < j} \ER(h_i) \cap \ER(h_j) \right) 
\leq \sum_{i,j : i < j} \Px\!\left( \ER(h_i) \cap \ER(h_j) \right).
\end{equation}
The remainder of the proof will establish that each term $\Px(\ER(h_i) \cap \ER(h_j))$ is small 
with high probability.
This is achieved using a ``Win-Win'' argument showing that
for every distinct $i, j$, either $\Px(\ER(h_i)) = \er(h_i)$ is small, or 
else $\Px(\ER(h_j) \vert \ER(h_i))$ is small.
In either case, it will follow that $\Px(\ER(h_i)\cap \ER(h_j))$ is small.

Specifically, we follow a ``conditioning'' argument.\footnote{Conditioning 
arguments of this type originate in the work of \citet*{hanneke:thesis} 
on ERM bounds and active learning, 
and were later used to analyze several different 
learning algorithms \citep*[e.g.,][]{darnstadt:15,hanneke:16b,zhivotovskiy2018localization}.
Notably, the argument was used by \citet*{simon:15}
to analyze majority votes of independent ERMs, 
and was used by \citet*{hanneke:16a} in the proof 
of the optimal PAC sample complexity. 
Its use in the present proof most closely follows 
this latter work.}
We claim that, with probability at least 
$1 - \delta_{0}/3$,
for every pair $i,j$ with $i < j$,  
either 
\begin{equation}
\label{eqn:eri-small}
\er(h_i) < \frac{320}{m}\ln\!\left(\frac{6\covcproj^2}{\delta_0}\right)
\end{equation}
or else
\begin{equation}
\label{eqn:ERi-intersection}
|\ER(h_i) \cap (S_{j} \setminus S_{i})| \geq 
\er(h_i) m / 80,
\end{equation}
where here the notation $S_j \setminus S_i$ denotes the set of samples from $S$ 
that are in $S_j$ but not $S_i$ (distinguished by their \emph{indices}, 
so that if $S$ contains 
two copies of some $x \in \X$ and one is in $S_i$ and the other in $S_j$, 
then the latter will still appear in $S_j \setminus S_i$).

Toward establishing the above claim, note that for each distinct $i,j$ we have  
\begin{align}
& \P\!\left( |\ER(h_i) \cap (S_{j} \setminus S_{i})| < \er(h_i) m / 80 
\text{ and } \er(h_i) \geq \frac{320}{m}\ln\!\left(\frac{6\covcproj^2}{\delta_0}\right) \right) \notag
\\ & \leq \P\!\left( |\ER(h_i) \!\cap\! (S_{j} \!\setminus\! S_{i})| \!<\! (1/2) \er(h_i) |S_j \!\setminus\! S_i| 
\text{ and } \er(h_i) \!\geq\! \frac{320}{m}\ln\!\left(\frac{6\covcproj^2}{\delta_0}\right) \!\text{ and } |S_j \!\setminus\! S_i| \!\geq\! \frac{m}{40} \right) \notag  
\\ & {\hskip 2cm}+ \P\!\left( |S_j \setminus S_i| < \frac{m}{40} \right). \label{eqn:ERi-eri-SjSi}
\end{align}
We begin with bounding the second term.
Note that 
$\E\!\Big[ |S_j \setminus S_i| \Big| S_i \Big] = \left( 1 - \frac{\lfloor m/4 \rfloor}{\lfloor m/2 \rfloor} \right) \lfloor m/4 \rfloor \geq m/20$ 
(noting that, since $m \geq 20$, we have $\lfloor m/4 \rfloor \geq m/5$ and $\lfloor m/2 \rfloor \geq m/3$).
Therefore, by a multiplicative Chernoff bound\footnote{As proven by \citealp*{hoeffding:63}, 
the moment generating function for sampling without replacement 
is upper bounded by the moment generating 
function for sampling with replacement, and hence the usual 
Chernoff bounds for sampling with replacement also hold 
for sampling without replacement.} 
conditioned on $S_i$, 
and the law of total probability,  
we have 
$\P( |S_j \setminus S_i| < m/40 ) \leq e^{-m/160} \leq \frac{\delta_{0}}{6 \covcproj^2}$.

Next we bound the first term in \eqref{eqn:ERi-eri-SjSi}.
We note that, conditioned on $|S_j \setminus S_i|$, 
the samples in $S_j \setminus S_i$ are i.i.d.\ (with distribution $\Px$) 
and independent of $h_i$.
Thus, a multiplicative Chernoff bound implies (almost surely)
\begin{equation*}
\P\!\Big( |\ER(h_i) \cap (S_j \setminus S_i)| < (1/2)\er(h_i)|S_j \setminus S_i| \Big| h_i, |S_j \setminus S_i| \Big)
\leq e^{-(1/8) \er(h_i) |S_j \setminus S_i|}.
\end{equation*}
Therefore, the first term in \eqref{eqn:ERi-eri-SjSi} is bounded by 
\begin{equation*}
\E\!\left[ e^{-(1/8) \er(h_i) |S_j \setminus S_i|} \ind\!\left[\er(h_i) \geq \frac{320}{m}\ln\!\left(\frac{6\covcproj^2}{\delta_0}\right) \text{ and } |S_j \setminus S_i| \geq m/40 \right] \right]
\leq \frac{\delta_0}{6 \covcproj^2}.
\end{equation*}
Altogether, we have that \eqref{eqn:ERi-eri-SjSi} is at most $\frac{\delta_0}{3 \covcproj^2}$.
Finally, by a union bound over all pairs $i,j$ with $i < j$, 
we conclude that with probability at least $1-\frac{\delta_0}{3}$, 
for every $i,j$ with $i < j$, at least one of \eqref{eqn:eri-small} or \eqref{eqn:ERi-intersection} holds, 
as claimed.

Since $h_j$ is correct on $S_j$, 
it is certainly correct on $\ER(h_i) \cap (S_j \setminus S_i)$.
Also note that the samples in 
$\ER(h_i) \cap (S_j \setminus S_i)$ are conditionally i.i.d.\ given 
$h_i$ and $|\ER(h_i) \cap (S_j \setminus S_i)|$, 
with conditional distribution $\Px(\cdot | \ER(h_i))$.
We can therefore apply the classic PAC bound for ERM  
\citep*{vapnik:74,blumer:89}, 
under the conditional distribution given $h_i$ and $|\ER(h_i) \cap (S_j \setminus S_i)|$, 
together with the law of total probability, 
to obtain that, 
for any distinct $i,j$, 
with probability at least $1 - \frac{\delta_{0}}{3 \covcproj^2}$,
\begin{equation*}
\Px( \ER(h_j) | \ER(h_i) ) 
\leq \frac{2 / \ln(2)}{| \ER(h_i) \cap (S_j \!\setminus\! S_i)|} 
\left( \vc \Log\!\left( \frac{2 e | \ER(h_i) \cap (S_j \!\setminus\! S_i) |}{\vc} \right) + \Log\!\left(\frac{6 \covcproj^2}{\delta_{0}} \right) \right).
\end{equation*}
(interpreting the bound to be infinite in the case $|\ER(h_i) \cap (S_j \setminus S_i)|=0$).
By the union bound, this holds simultaneously for all $i,j$ with $i < j$ 
with probability at least $1 - \frac{\delta_{0}}{3}$.
Combining this with the event established above, by the union bound and monotonicity of 
$x \mapsto (1/x) \Log(a x)$, 
with probability at least $1 - \frac{2}{3} \delta_{0}$, 
every $i,j$ with $i < j$ either satisfy \eqref{eqn:eri-small} 
or 
\begin{equation}
\label{eqn:ERj-ERi-conditional-bound-1}
\Px( \ER(h_j) | \ER(h_i) ) \leq \frac{160/\ln(2)}{\er(h_i)m}\left(\vc \Log\!\left(\frac{\er(h_i) m e}{40 \vc}\right) + \Log\!\left(\frac{6 \covcproj^2}{\delta_0}\right)\right).
\end{equation}

Since $S_{0}$ is i.i.d.\ (with marginal distribution $\Px$ and $\target$ labels) 
with $m/2 \leq |S_{0}| < m$, the inductive hypothesis and the union bound imply that, 
with probability at least 
$1 - \frac{\delta_{0}}{3}$, every $h_i$ has
\begin{equation*}
\er(h_i) \leq 
\frac{ c \covcproj^2}{m/2}\!\left( \vc \Log(\covcproj) + \Log\!\left(\frac{3(\covcproj+1)}{\delta_{0}}\right) \right).
\end{equation*}
Plugging this into the log in \eqref{eqn:ERj-ERi-conditional-bound-1}, 
by the union bound we have that, with probability at least $1-\delta_{0}$, 
every $i,j$ with $i < j$ either satisfy \eqref{eqn:eri-small} 
or $\Px( \ER(h_j) | \ER(h_i) )$ is upper bounded by 
\begin{equation*}
\frac{160/\ln(2)}{\er(h_i)m}\left(\vc \Log\!\left(c \covcproj^2 \left( \Log(\covcproj) + \frac{1}{\vc}\Log\!\left(\frac{3 (\covcproj+1)}{\delta_{0}}\right) \right)\right) + \Log\!\left(\frac{6\covcproj^2}{\delta_0}\right)\right).
\end{equation*}
In either case (i.e., whether \eqref{eqn:eri-small} holds or not), on this event 
\begin{align*}
& \Px( \ER(h_i) \cap \ER(h_j) ) 
= \er(h_i) \Px(\ER(h_j) | \ER(h_i))
\\ & \leq \frac{320}{m}\left(\vc \Log\!\left(c \covcproj^2 \left( \Log(\covcproj) + \frac{1}{\vc}\Log\!\left(\frac{3 (\covcproj+1)}{\delta_{0}}\right) \right)\right) + \Log\!\left(\frac{6\covcproj^2}{\delta_0}\right)\right).
\end{align*}

Combining this with \eqref{eqn:hat-er-union-bound-1} we have that, 
with probability at least $1-\delta_{0}$, 
\begin{equation*}
\er(\hat{h}) \leq \binom{\covcproj+1}{2}\frac{320}{m}\left(\vc \Log\!\left(c \covcproj^2 \left( \Log(\covcproj) + \frac{1}{\vc}\Log\!\left(\frac{3(\covcproj+1)}{\delta_{0}}\right) \right)\right) + \Log\!\left(\frac{6\covcproj^2}{\delta_0}\right)\right).
\end{equation*}
Finally, simplifying the expression, and noting that the constant $c$ only appears in a logarithmic term,  
one can verify that for a sufficiently large choice of numerical 
constant $c$ (e.g., any $c \geq 2^{17}$ would suffice), the right hand side is at most
\begin{equation*}
\frac{c \covcproj^2}{m}\!\left( \vc\Log(\covcproj) + \Log\!\left(\frac{1}{\delta_{0}}\right) \right),
\end{equation*}
which extends the inductive hypothesis to $m_0=m$.
The result now follows by the principle of induction.
\end{proof}

\begin{proof}[of Theorem \ref{thm:upper-bound-second}]
Fix any target concept $\target \in \H$ and any distribution $\Px$ on $\X$.
We will argue that, for any finite labeled data set $S$ with $\target$ labels,
the proper learner $\alg_{\text{ERM}}(S)$
outputs $\hat{h} \in \H$ correct on $S$, 
and in the case that $S$ is $m_{0}$ i.i.d.\ training examples 
(with $\Px$ marginal distribution and $\target$ labels), 
then for any $\delta_{0} \in (0,1)$, 
with probability at least $1-\delta_{0}$,
the classifier $\hat{h} = \alg(S)$ satisfies 
\begin{equation}
\label{eqn:the-claim-2}
\er(\hat{h}) \leq \frac{c \covcproj^3}{m_{0}}\!\left( \vc\Log(\covcproj) + \Log\!\left(\frac{1}{\delta_{0}}\right) \right),
\end{equation}
where $c \geq 1$ is an appropriate finite numerical constant.
Note that Theorem~\ref{thm:upper-bound-second} would immediately follow from this 
(since it holds for any $\Px$ and any $\target \in \H$), 
taking $\delta_{0} = \delta$, and noting that $m_{0}$ of size proportional to the 
stated bound on $\SC_{\alg_{\text{ERM}}}(\eps,\delta)$ makes the right hand side of \eqref{eqn:the-claim-2} 
less than $\eps$.

If $m_{0} < \covcproj+1$, the algorithm returns $\hat{h} = {\rm ERM}(S)$ in Step 1, which 
is an element of $\H$ that is correct on $S$ by definition; 
furthermore, the inequality trivially holds in this case, as the right hand side is greater than $1$.
These values of $m_{0}$ will serve as our base case in an inductive argument. 
Now, for induction, suppose 
$m \geq \covcproj+1$,
and that for any $\delta_{0} \in (0,1)$ and $m_{0} < m$, 
for any correctly labeled data set $S$ of size $m_{0}$, 
the classifier $\hat{h}$ returned by $\alg_{\text{ERM}}(S)$ 
is in $\H$ and is correct on $S$, and in the case that $S$ is i.i.d.\ (with 
$\Px$ marginal distribution and $\target$ labels),
then with probability at least $1-\delta_{0}$ 
\eqref{eqn:the-claim-2} holds.

Next we extend this claim to $m_{0} = m$.
Consider a run of $\alg_{\text{ERM}}(S)$ with 
a correctly labeled finite data set $S$ of size $m$.
Consider the sets $S_i$ and classifiers $h_i$ as 
defined in the algorithm, and with a slight abuse 
of notation we also use $S_i$ to denote the unlabeled 
portion of $S_i$ (i.e., the points $x$ such that $(x,\target(x)) \in S_i$).
Define 
$h_{{\rm maj}}(x) = {\rm Majority}(h_1(x),\ldots,h_{\covcproj+1}(x))$ 
(breaking ties to favor label $-1$, say), 
and as before using the notation \eqref{eq:majoritylabel}, 
\[
\X_{0} = \mathcal{X}_{\{h_1,\ldots,h_{\covcproj+1}\}, \covcproj}.
\]
Since each $h_i$ is in $\H$ (by the inductive hypothesis), 
the classifier $\hat{h} = {\rm Proj}_{\H}(h_1,\ldots,h_{\covcproj+1})$ 
in Step 4 is well defined, 
and by definition, $\hat{h} \in \H$ and has 
$\hat{h}(x) = h_{{\rm maj}}(x)$ 
on every $x \in \X_{0}$.
Note that since every $(x,y) \in S$ 
is included in just one set $S_i$, and hence is 
in every set $\bigcup_{j' \neq j} S_{j'}$ except
$j=i$, 
and by the inductive hypothesis every 
$j \neq i$ has $h_j$ correct on $\bigcup_{j' \neq j} S_{j'}$,
we see that every $(x,y) \in S$ has $x \in \X_{0}$,
and $h_{{\rm maj}}(x) = y$, 
so that this extends the 
claim that $\hat{h}$ is in $\H$ and is correct on $S$
for the inductive proof, and all that remains is to extend the bound on the error rate to hold for $m_0 = m$.

Toward this end, consider the case that 
$S$ is an i.i.d.\ data set of size $m$ (with 
marginal distribution $\Px$ and $\target$ labels).
Fix any $\delta_{0} \in (0,1)$.
By the inductive hypothesis 
each $h_i$ is correct on $\bigcup_{j \neq i} S_j$ 
and has $h_i \in \H$.
We will follow a similar ``conditioning'' argument
to that used in the proof of Theorem~\ref{thm:upper-bound}.
As in that proof, define  
$\ER(h) = \{ x : h(x) \neq \target(x) \}$
for any classifier $h$.
Since $\hat{h}$ agrees with $h_{{\rm maj}}$ 
on~$\X_{0}$, we have 
\begin{equation}
\label{eqn:hat-ER-2}
\ER(\hat{h}) \subseteq 
(\X \setminus \X_{0}) \cup ( \X_{0} \cap \ER(h_{{\rm maj}} )).
\end{equation}
Furthermore, for any $x \in \X \setminus \X_{0}$, 
at least two values of $i$ have $h_i(x)$
different from the majority of the values  
$h_1(x),\ldots,h_{\covcproj+1}(x)$,
which means there are at least two classifiers 
predicting each label in $\Y$, 
and hence there are at least two classifiers 
$h_i$ with 
$h_i(x) \neq \target(x)$.
Furthermore, any $x$ with 
$h_{{\rm maj}}(x) \neq \target(x)$ 
certainly also has at least two $h_i$ classifiers 
with $h_i(x) \neq \target(x)$.
Therefore, the set on the right hand side 
of \eqref{eqn:hat-ER-2} is contained within
$\bigcup_{i,j : i \neq j} \ER(h_i) \cap \ER(h_j)$.
In particular, this implies  
\begin{equation}
\label{eqn:hat-er-union-bound-2}
\er(\hat{h}) = \Px(\ER(\hat{h})) 
\leq \Px\!\left( \bigcup_{i,j : i < j} \ER(h_i) \cap \ER(h_j) \right) 
\leq \sum_{i,j : i < j} \Px\!\left( \ER(h_i) \cap \ER(h_j) \right).
\end{equation}
The remainder of the proof will establish that each term $\Px(\ER(h_i) \cap \ER(h_j))$ is small
with high probability.

For any $i$, we have 
\begin{align}
& \P\left( |\ER(h_i) \cap S_i| < (1/2) \er(h_i) |S_i| 
\text{ and } \er(h_i) \geq \frac{8}{|S_i|}\ln\!\left(\frac{3(\covcproj+1)}{\delta_0}\right) \right) \notag
\\ & = 
\E\!\left[ \P\Big( |\ER(h_i) \cap S_i| < (1/2) \er(h_i) |S_i| \Big| \er(h_i) \Big) \ind\!\left[ \er(h_i) \geq \frac{8}{|S_i|}\ln\!\left(\frac{3(\covcproj+1)}{\delta_0}\right) \right] \right]. \label{eqn:ER-conditional-exp-bound}
\end{align}
Since $S_i$ is excluded from the training set $\bigcup_{j \neq i} S_j$ producing $h_i$,
we have that $S_i$ and $h_i$ are independent random variables.
Therefore, a multiplicative Chernoff bound implies that (almost surely)
\begin{equation*}
\P\Big( |\ER(h_i) \cap S_i| < (1/2) \er(h_i) |S_i| \Big| \er(h_i) \Big) \leq \exp(- \er(h_i) |S_i| / 8),
\end{equation*}
so that 
\eqref{eqn:ER-conditional-exp-bound} is at most 
$\frac{\delta_0}{3(\covcproj+1)}$.
In other words, with probability at least 
$1 - \frac{\delta_0}{3(\covcproj+1)}$, either 
\begin{equation}
\label{eqn:eri-small-2}
\er(h_i) < \frac{8}{|S_i|}\log\!\left(\frac{3(\covcproj+1)}{\delta_0}\right)
\end{equation}
or else
\begin{equation}
\label{eqn:ER-cap-S-LB}
|\ER(h_i) \cap S_i| \geq (1/2) \er(h_i) |S_i|.
\end{equation}
By the union bound, with probability at least 
$1-\frac{\delta_0}{3}$, every $i$ satisfies 
at least one of \eqref{eqn:eri-small-2} or \eqref{eqn:ER-cap-S-LB}.

For distinct $j, i$, 
since $h_j$ is correct on $\bigcup_{i' \neq j} S_{i'} \supseteq S_i$ 
it is certainly correct on $\ER(h_i) \cap S_i$.
Also note that the samples in $\ER(h_i) \cap S_i$ are 
conditionally i.i.d.\ given $h_i$ and $|\ER(h_i) \cap S_i|$, 
with conditional distribution $\Px(\cdot | \ER(h_i))$.
Thus, by the classic PAC bound for ERM 
\citep*{vapnik:74,blumer:89} (applied under 
the conditional distribution given 
$h_i$ and $|\ER(h_i) \cap S_i|$) 
and the law of total probability, 
with probability at least $1-\frac{\delta_0}{3\covcproj^2}$,
\begin{equation*}
\Px( \ER(h_j) | \ER(h_i) )
\leq \frac{2/\ln(2)}{|\ER(h_i) \cap S_i|}\left(\vc \Log\!\left(\frac{2e |\ER(h_i) \cap S_i|}{\vc}\right) + \Log\!\left(\frac{6\covcproj^2}{\delta_0}\right)\right).
\end{equation*}
By the union bound (over $i,j$ pairs with $i < j$, 
and combining with the event above) and 
monotonicity of $x \mapsto (1/x)\Log(ax)$,
with probability at least $1-\frac{2}{3}\delta_{0}$, 
every pair $i,j$ with $i < j$ have either \eqref{eqn:eri-small-2} 
or 
\begin{equation}
\label{eqn:conditional-ERM-ER}
\Px( \ER(h_j) | \ER(h_i) )
\leq \frac{4/\ln(2)}{\er(h_i)|S_i|}\left(\vc \Log\!\left(\frac{e\; \er(h_i)|S_i|}{\vc}\right) + \Log\!\left(\frac{6\covcproj^2}{\delta_0}\right)\right).
\end{equation}

Since each $h_i$ is correct on $\bigcup_{j \neq i} S_j$, which is itself an 
i.i.d.\ data set of size strictly smaller than $m$ 
and no smaller than $\covcproj \lfloor m / (\covcproj+1) \rfloor \geq \frac{m}{3}$ 
(using the fact that $\covcproj \geq 2$, from the definition), 
the inductive hypothesis and the union bound imply that 
with probability at least 
$1 - \frac{\delta_{0}}{3}$, 
every $h_i$ has
\begin{align*}
\er(h_i) &
\leq \frac{3c \covcproj^3}{m}\!\left( \vc\Log(\covcproj) + \Log\!\left(\frac{3 (\covcproj+1)}{\delta_{0}}\right) \right).
\end{align*} 
Plugging this into the log in \eqref{eqn:conditional-ERM-ER} above, together with the fact that 
$|S_i| \geq \lfloor m / (\covcproj+1) \rfloor \geq (1/2) m / (\covcproj+1)$, 
we have by the union bound that  
with probability at least $1-\delta_{0}$, every pair $i,j$ with $i < j$ either 
have \eqref{eqn:eri-small-2} or have that $\Px(\ER(h_j) | \ER(h_i) )$ is upper bounded by 
\begin{equation*}
\frac{(8/\ln(2))(\covcproj+1)}{\er(h_i)m}\left(\vc \Log\!\left(\frac{3}{2} e c \covcproj^2\!\left( \Log(\covcproj) + \frac{1}{\vc}\Log\!\left( \frac{6 \covcproj^2}{\delta_{0}} \right)  \right) \right) + \Log\!\left(\frac{6\covcproj^2}{\delta_0}\right)\right).
\end{equation*}
In either case (i.e., whether \eqref{eqn:eri-small-2} holds or not), 
on this event we have
\begin{align*}
& \Px( \ER(h_i) \cap \ER(h_j) ) 
= \er(h_i) \Px(\ER(h_j) | \ER(h_i))
\\ & \leq \frac{16(\covcproj+1)}{m}\left(\vc \Log\!\left(\frac{3}{2} e c \covcproj^2\!\left( \Log(\covcproj) + \frac{1}{\vc}\Log\!\left( \frac{6 \covcproj^2}{\delta_{0}} \right)  \right) \right) + \Log\!\left(\frac{6\covcproj^2}{\delta_0}\right)\right).
\end{align*}
Combining this with \eqref{eqn:hat-er-union-bound-2},
we conclude that on the above event of probability at least $1-\delta_{0}$, 
\begin{equation*}
\er(\hat{h}) \leq \binom{\covcproj+1}{2}
\frac{16(\covcproj+1)}{m}\left(\vc \Log\!\left(\frac{3}{2} e c \covcproj^2\!\left( \Log(\covcproj) + \frac{1}{\vc}\Log\!\left( \frac{6 \covcproj^2}{\delta_{0}} \right)  \right) \right) + \Log\!\left(\frac{6\covcproj^2}{\delta_0}\right)\right).
\end{equation*}
By simplifying the expression on the right hand side 
and noting that the constant $c$ only appears in a logarithm, 
one can verify that for a sufficiently large choice of numerical constant $c$
(e.g., any $c \geq e^8$ would suffice), 
the right hand side is at most
\begin{equation*}
\frac{c \covcproj^3}{m}\!\left( \vc\Log(\covcproj) + \Log\!\left(\frac{1}{\delta_{0}}\right) \right),
\end{equation*}
which extends the inductive hypothesis to $m_0=m$.
The result now follows by the principle of induction.
\end{proof}

\section{Proofs of lower bounds}
\label{sec:lowerboundproofs}

We start with the analysis of the 
\emph{coupon collector's problem} 
and corresponding lower bounds (see e.g., \citep*{motwani2010randomized}). 
Since we need a slightly more general result 
(not appearing in the standard textbooks to the best of our knowledge), 
we present a short proof for the sake of completeness. We remark that a similar argument was used in \citep*{simon:15}.

\begin{lemma}[Generalized coupon collector's problem]
\label{lem:couponcollect}
Let $m\leq k\in\mathbb{N}$. 
Consider a sequence $x_1,x_2,\ldots$ of independent uniform draws from a set of size $k$.
Assume $z \in\mathbb{N}$ satisfies that with probability at least $1/2$, 
the number of distinct elements among $x_1,\ldots, x_z$ is at least $k-m$.
Then,
\[
z\geq k\left(\ln\frac{k}{m} - 1 - \sqrt{\frac{2}{m}}\right).
\]
\end{lemma}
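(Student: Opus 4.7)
The plan is to rephrase the assumption in terms of a standard coupon-collector waiting time and apply Chebyshev's inequality. Let $D_z$ denote the number of distinct elements among $x_1,\ldots,x_z$, and for each $j \leq k$ let $T_j$ be the first time $t$ at which $D_t = j$. Since $D_z \geq k-m$ if and only if $T_{k-m} \leq z$, the hypothesis reads $\P(T_{k-m} \leq z) \geq 1/2$, and the task reduces to estimating $\E T_{k-m}$ from below and $\Var(T_{k-m})$ from above.

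Next, I would decompose $T_{k-m} = \sum_{j=1}^{k-m} G_j$, where $G_j$ is the waiting time between observing the $(j-1)$-th and the $j$-th distinct element. The $G_j$ are independent and geometric with success probability $p_j = (k-j+1)/k$. Reindexing by $i = k-j+1$ and invoking the standard harmonic estimates $H_n \geq \ln(n+1)$ and $H_n \leq 1 + \ln n$ yields
\[
\E T_{k-m} \;=\; k\sum_{i=m+1}^{k} \frac{1}{i} \;\geq\; k\bigl(\ln(k/m)-1\bigr),
\]
while the crude bound $\Var(G_j) \leq 1/p_j^2$ together with the tail estimate $\sum_{i>m} 1/i^2 \leq 1/m$ gives
\[
\Var(T_{k-m}) \;\leq\; k^2 \sum_{i=m+1}^{k} \frac{1}{i^2} \;\leq\; \frac{k^2}{m}.
\]

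The lemma then follows by applying Chebyshev's inequality with $a = \sqrt{2\,\Var(T_{k-m})}$: one has $\P(T_{k-m} \leq \E T_{k-m} - a) \leq \Var(T_{k-m})/a^2 = 1/2$, so combining with the hypothesis $\P(T_{k-m} \leq z) \geq 1/2$ forces
\[
z \;\geq\; \E T_{k-m} - a \;\geq\; k\!\left(\ln\frac{k}{m} - 1 - \sqrt{\frac{2}{m}}\right),
\]
which is the desired inequality. Essentially every step is routine, and the proof has no real obstacle; the only place requiring mild care is the boundary case in Chebyshev (where $\P=1/2$ could a priori coincide with $z$ equal to the bound), which one resolves by first applying the inequality with $c=\sqrt{2}+\eta$ and letting $\eta\to 0$, exploiting the strict slack built into the harmonic-sum lower bound on $\E T_{k-m}$.
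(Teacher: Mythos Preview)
Your proposal is correct and follows essentially the same approach as the paper: decompose the waiting time $T_{k-m}$ into independent geometric increments, bound the mean below via harmonic-number estimates and the variance above by $k^2/m$, and finish with Chebyshev. If anything, you are slightly more careful than the paper about the $\P=1/2$ boundary case in the Chebyshev step.
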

\begin{proof}
Let $Z$ denote the random variable that counts the number of independent draws until at least $k - m$ distinct elements are present among $x_1,x_2,\ldots$. We may write $Z = \sum\limits_{i = 1}^{k - m}Z_i$, 
where $Z_i$ represents the (random) number of draws 
after $i-1$ distinct elements were observed 
and up to and including the first draw 
when $i$ distinct elements have been observed.
Observe that $Z_i$-s are independent, each having the geometric distribution with parameter $p_i = \frac{k - i + 1}{k}$. Thus, $\E Z_i = \frac{1}{p_i}$ and $\Var(Z_i) = \frac{1 - p_i}{p_i^2}$, which implies
\[
\E Z = \sum\limits_{i = 1}^{k - m} \E Z_i = \sum\limits_{i = 1}^{k - m}\frac{k}{k - i + 1} = k(H_k - H_m),
\]
where $H_p = \sum\limits_{i = 1}^p\frac{1}{i}$ stands for the $p$-th Harmonic number. Further, we have
\[
\Var(Z) = \sum\limits_{i = 1}^{k - m}\frac{k(i - 1)}{(k - i + 1)^2} 
= \sum\limits_{j = m + 1}^{k}\frac{k(k - j)}{j^2} \le k^2\sum\limits_{j = m + 1}^{k}\frac{1}{j^2} \le \frac{k^2}{m}.
\]
Finally, by Chebyshev's inequality and the relation $\ln p \le H_p \le \ln p + 1$ we have, with probability at least $\frac{1}{2}$,
\[
Z \ge \E Z - k\sqrt{\frac{2}{m}} \ge k\left(\ln\frac{k}{m} - 1 - \sqrt{\frac{2}{m}}\right).
\]
The claim follows.
\end{proof}
\begin{remark}
\label{rem:handycor}
We will often use the following handy corollary of Lemma \ref{lem:couponcollect}: 
under the conditions of this result,  
$z \geq \frac{k}{2}\ln \frac{k}{m}$,
provided that $1 + \sqrt{\frac{2}{m}} \le \frac{1}{2}\ln \frac{k}{m}$.
\end{remark}

\begin{proof}[of Theorem \ref{thm:lower-bound}]
A lower bound $\Omega\!\left(\frac{\vc}{\epsilon} + \frac{1}{\epsilon}\log\!\left(\frac{1}{\delta}\right)\right)$ 
holds for all learning algorithms \citep*{vapnik:74,ehrenfeucht:89}, 
so we focus only on establishing a lower bound $\Omega\!\left( \frac{1}{\epsilon} \log(\covcmod) \ind[ \epsilon \leq 1/\covcmod ] \right)$ for proper learners. Without loss of generality we may assume that $\covcmod \ge 128$ since for smaller values of $\covcmod$ the lower bound is automatically established 
by choosing a small enough numerical constant factor.
We in fact establish a stronger result, which also implies the second claim: 
namely, for any $k \geq 128$ such that 
there exists a hollow star of size $k$, 
any $\epsilon \leq 1/k$ has $\SC_{{\rm prop}}(\epsilon,\delta) \geq \frac{c}{\epsilon} \log( k )$
(for a numerical constant $c > 0$).
Note that both of the claimed lower bounds will follow from this, since when $\covcmod < \infty$ 
there exists a hollow star of size $\covcmod$,
and when $\covcmod = \infty$ there exists a sequence $k_i \to \infty$ for which 
there exist hollow stars of each size $k_i$, so that choosing $\epsilon_i = 1/k_i$ 
the lower bound $\frac{c}{\epsilon_i} \log\!\left( \frac{1}{\epsilon_i} \right)$ holds for 
each $\epsilon_i$.

Fix any $k \geq 128$ such that 
there exists a hollow star set  $S = \{(x_1,y_1),\ldots,(x_{k},y_{k})\}$, 
and for each $i \in \{1,\ldots,k\}$ let 
$h_i \in \H$ be such that 
$\{ j : h_i(x_j) \neq y_j \} = \{ i \}$.
Fix any proper learning algorithm $\alg'$. 
We construct a target function $\target$ and distribution $\Px$ 
to witness the lower bound via the probabilistic method.
Let $\epsilon \leq 1/k$ 
and choose $i^* \sim {\rm Uniform}(\{2,\ldots,k\})$,  
and set 
$\target = h_{i^*}$ and 
$\Px(\{x_i\}) = \epsilon/(1-\epsilon)$ for 
$i \in \{2,\ldots,k\}\setminus\{i^*\}$, 
$\Px(\{x_1\}) = 1-(k-2)\epsilon/(1-\epsilon)$ (which is greater than $\epsilon$), and $\Px(\{x_{i^*}\})=0$.
Consider running $\alg'$ with a 
data set $D_n$ 
(conditionally i.i.d.\ given $i^*$, 
with each point $(X,Y)$ having 
$X \sim \Px$ and $Y=\target(X)$), 
of some size  
$n < \frac{1}{8} \frac{1-\eps}{\eps} \ln(k-2)$,
and let $\hat{h}$ be the classifier 
it outputs.  Since $\alg'$ is proper 
($\hat{h} \in \H$) and $\H[S]=\emptyset$ 
($S$ being a hollow star), we know that 
$\hat{h}$ cannot realize 
the $y_i$ classification of every $x_i$, 
so there must be a non-empty set 
$\hat{I} = \{ i : \hat{h}(x_{i}) \neq y_{i} \} \neq \emptyset$.
If any of these $\hat{i} \in \hat{I}$ are not equal $i^*$, then $\er(\hat{h}) > \epsilon$.
Denote by $\hat{n}$ the number of the 
$n$ data points in $D_n$ falling in $\{(x_2,y_2),\ldots,(x_k,y_k)\} \setminus \{(x_{i^*},y_{i^*})\}$,
and denote by $\hat{n}_1$ the number 
of \emph{distinct} elements of 
$\{(x_2,y_2),\ldots,(x_k,y_k)\} \setminus \{(x_{i^*},y_{i^*})\}$ 
observed in the data set $D_n$.
By Markov's inequality, 
with probability at least 
 $\frac{3}{4}$, 
 we have 
$\hat{n} \leq 4\frac{n(k - 2)\eps}{1 - \eps} 
< \frac{1}{2} (k-2) \ln (k-2)$.
Furthermore, note that conditioned 
on $\hat{n}$ and $i^*$, the $\hat{n}$ 
samples in $D_n$ falling in 
$\{ (x_2,y_2),\ldots,(x_k,y_k) \} \setminus \{(x_{i^*},y_{i^*})\}$ 
are conditionally independent, 
with conditional distribution uniform 
on this set.
Therefore, on the event that 
$\hat{n} < \frac{1}{2} (k-2) \ln (k-2)$, 
Lemma~\ref{lem:couponcollect} implies that 
$\P( \hat{n}_1 < k-3 | \hat{n}, i^* ) > \frac{1}{2}$ 
(noting that 
$k \geq 128$ implies $1+\sqrt{2} \leq \frac{1}{2}\ln(k-2)$).
Finally, note that conditioned on 
$D_n$, the variable $i^*$ has 
conditional distribution uniform 
on the $k-1 - \hat{n}_1$ values 
$i \in \{2,\ldots,k\}$ with  
$(x_i,y_i) \notin D_n$.
Thus, on the event that $\hat{n}_1 < k-3$, 
we have that 
$\P( \hat{I} \neq \{i^*\} | \hat{I}, D_n ) 
\geq \frac{(k-1 - \hat{n}_1)-1}{k-1-\hat{n}_1} \geq \frac{1}{2}$.
Altogether, we have 
\begin{align*}
& \P\!\left( \hat{I} \neq \{i^*\} \right) 
\geq \E\!\left[ \P\!\left( \hat{I} \neq \{i^*\} \middle| \hat{I}, D_n \right) \ind\!\left[ \hat{n}_1 < k-3 \right] \right] 
\geq \tfrac{1}{2} \P\!\left( \hat{n}_1 < k-3 \right) 
\\ & \geq \tfrac{1}{2} \E\!\left[ \P\!\left( \hat{n}_1 < k-3 \middle| \hat{n}, i^* \right) \ind\!\left[ \hat{n} < \tfrac{1}{2} (k-2) \ln (k-2) \right] \right] 
\geq \tfrac{1}{4} \P\!\left( \hat{n} < \tfrac{1}{2} (k-2) \ln (k-2) \right)
\geq \tfrac{3}{16}.
\end{align*}

In particular, this implies that for any proper learning algorithm $\alg'$, 
if $n < \frac{1}{8}\frac{1-\epsilon}{\epsilon} \ln(k-2)$, 
there exist fixed choices of $\target \in \H$ and $\Px$ such that, 
with probability at least $3/16$, 
the classifier $\hat{h}$ returned by $\alg'$  has $\er(\hat{h}) > \epsilon$. 
The claim follows.
\end{proof}
\begin{proof}[of Theorem \ref{thm:sometimes-tight}]
Fix any $\vc$, $\covc$.
Since there is already a known 
lower bound 
$\frac{c'}{\epsilon}\!\left( \vc + \Log\!\left(\frac{1}{\delta}\right)\right)$ 
from \citep*{ehrenfeucht:89,blumer:89,vapnik:74},
for PAC learning in general 
(for some numerical constant $c'>0$), 
we focus on showing a lower bound 
$\frac{c\vc}{\epsilon}\Log\!\left(\frac{\covc}{\vc} \land \frac{1}{\epsilon}\right)$
for some numerical constant $c>0$.
Furthermore, since $\Log(x) \geq 1$ 
(from its definition above), 
this lower bound again already follows from 
the lower bound of \citet*{ehrenfeucht:89}  
in the case that $\covc < 126 \vc$ 
(for instance, taking $\X = \{1,\ldots,\vc-1+\covc\}$ 
and $\H = \{ x \mapsto 2 \ind[ x \in I ] - 1 : I \subseteq \X, |I \cap \{1,\ldots,\covc\}|=1 \}$, 
which one can easily verify has VC dimension $\vc$ and dual Helly number $\covc$).
To address the remaining case,  
suppose $\covc \geq 126 \vc$.

In the special case $\vc = 1$, 
simply take $\X = \{1,\ldots,\covc\}$ 
and $\H = \{ x \mapsto 2 \ind[ x = t ] - 1 : t \in \X \}$ 
the singleton classifiers.  
It is an easy exercise to verify that the 
VC dimension of $\H$ is indeed $1$, and 
that $\{(x,-1) : x \in \X \}$ is a hollow star 
set of size $\covc$, so that Lemma~\ref{lem:all-the-hellys} (together with the fact that this is clearly the largest possible hollow star set, and that $\H$ is finite and 
therefore closed) implies the dual Helly number is 
indeed $\covc$.  The claimed lower bound 
for this case then follows from 
Theorem~\ref{thm:lower-bound}.
To address the remaining case, for the rest of the 
proof suppose $\vc \geq 2$.

Let 
$\X = \{ (i,j) : i \!\in\! \{\vc-1,\ldots,\covc+\vc-2\}, j \!\in\! \{ 1,\ldots,i \} \}$.
For each $i \in \{\vc-1,\ldots,\covc+\vc-2\}$ 
and each $J \subseteq \{ 1, \ldots, i \}$ with $|J|=\vc-1$, 
define a classifier $h_{i,J}(i',j) = 1 - 2\ind[ i' = i ] \ind[ j \notin J ]$: 
that is, $h_{i,J}$ classifies as $1$ everything that does not have first coordinate equal $i$, 
and exactly $\vc-1$ of the points that do have first coordinate equal $i$.
Set 
\[
\H = \{ h_{i,J} : i \in \{\vc-1,\ldots,\covc+\vc-2\}, J \subseteq \{1,\ldots,i\}, |J|=\vc-1 \}.
\]

We first show that the VC dimension of $\H$ is 
indeed $\vc$.  To see that $\vc$ points can be shattered, 
simply take $i=2\vc-1$ (which has $i \leq \covc+\vc-2$ since $\covc \geq \vc+1$) 
and we claim that the points $(i,1),\ldots,(i,\vc)$ 
are shattered: for any strict subset 
$J \subset \{1,\ldots,\vc\}$, we can realize 
a labeling with 
$\{(i,j) : j \in J\}$ positive and the other $\vc-|J|$ negative with $h_{i,J \cup J'}$ where $J'$ is 
any subset of $\{\vc+1,\ldots,2\vc-1\}$ with $|J'|=\vc-1-|J|$; 
also, we can realize the all-positive labeling of these 
$\vc$ points with $h_{\vc-1,1:(\vc-1)}$.
To show no set of $\vc+1$ points can be shattered, 
note that if $\{((i_1,j_1),-1),\ldots,((i_{\vc+1},j_{\vc+1}),-1)\}$ 
is realizable, then $i_{1} = \cdots = i_{\vc+1}$; 
but in this case, $\{((i_{1},j_{1}),1),\ldots,((i_{\vc},j_{\vc}),1),((i_{\vc+1},j_{\vc+1}),-1)\}$ is not realizable, 
and hence no set of size $\vc+1$ is shattered.

Next we argue that the dual Helly numer of $\H$ is 
indeed $\covc$.  To see that it is at least $\covc$, 
note that $\{((\covc+\vc-2,1),-1),\ldots,((\covc+\vc-2,\covc),-1)\}$ is a hollow star set of size $\covc$, 
so that Lemma~\ref{lem:all-the-hellys} implies 
the dual Helly number is at least $\covc$.
To see that it is also at most $\covc$, 
consider any unrealizable set $S$.  
If some $x$ has $\{(x,-1),(x,1)\} \subseteq S$, 
this is clearly an unrealizable subset of size 
$2 \leq \covc$.  Otherwise if no such $x$ exists, 
then note that since $h_{\vc-1,1:(\vc-1)}$ is positive 
on all of $\X$, there must be some $(i,j_{-})$ with 
$((i,j_{-}),-1) \in S$.  If there are in fact 
\emph{two} points $(i,j)$, $(i',j')$ with $i \neq i'$ 
and $\{((i,j),-1),((i',j'),-1)\} \subseteq S$, then
again this is an unrealizable subset of size $2 \leq \covc$.  Otherwise, if every $(i,j)$ with $((i,j),-1) \in S$ 
has the \emph{same} $i$, then it must be that 
either there exist $j_1,\ldots,j_{\vc}$ 
with $\{ ((i,j_1),1),\ldots,((i,j_{\vc}),1) \} \subset S$, 
in which case $\{ ((i,j_1),1),\ldots,((i,j_{\vc}),1),((i,j_{-}),-1)\}$ 
is an unrealizable subset of size $\vc+1 \leq \covc$, 
or else there exist $j_1,\ldots,j_{i-(\vc-2)}$ 
with $\{ ((i,j_1),-1),\ldots,((i,j_{i-(\vc-2)}),-1) \} \subseteq S$, 
in which case this is an unrealizable subset of size 
$i-(\vc-2) \leq \covc$.
Since this covers all possible cases for the set $S$, 
we conclude that the dual Helly number is equal  
$\covc$.

Fix any $\delta \in (0,1/100)$.
For any $\epsilon \in (1/504,1/8)$, 
a lower bound $\frac{c \vc}{\epsilon} \Log\!\left(\frac{\covc}{\vc} \land \frac{1}{\epsilon} \right)$ follows from the lower bound 
$\frac{c' \vc}{\epsilon}$ of \citet*{ehrenfeucht:89} 
(for $c$ a sufficiently small numerical constant).  
To address the remaining case, fix any $\epsilon \in (0,1/504]$.
If $\epsilon \geq \frac{\vc-1}{4(\covc-1)}$,
let $i_{\epsilon} = \lfloor (\vc-1)/(4\epsilon) \rfloor + \vc-1$, 
and otherwise let $i_{\epsilon} = \covc+\vc-2$.
We prove the lower bound via the probabilistic method.
Let $J^*$ be a subset of $\{1,\ldots,i_{\epsilon}\}$ 
with $|J^*| = \vc-1$ 
chosen uniformly at random (without replacement).
Let $\Px(\{(i_{\epsilon},j)\}) = \frac{4\epsilon}{\vc-1}$
for every $j \in \{1,\ldots,i_{\epsilon}\} \setminus J^*$, and let $\Px(\{(\vc-1,1)\})=1-(i_{\epsilon}-(\vc-1)) \frac{4\epsilon}{\vc-1}$,  
and define the target concept $\target = h_{i_{\epsilon},J^*}$: 
in particular, 
$\target((i_{\epsilon},j)) = 2 \ind[ j \in J^* ] - 1$,
and hence $\Px$ has zero mass on the set 
of all points $(i_{\epsilon},j)$ 
where $\target((i_{\epsilon},j)) = 1$ 
and has mass $\frac{4\epsilon}{\vc-1}$ 
on every point $(i_{\epsilon},j)$ where 
$\target((i_{\epsilon},j))=-1$; 
any remaining probability mass is placed 
on $(\vc-1,1)$, which is an uninformative 
point (since every $h_{i,J}$ classifies it $1$).

Fix any sample size 
$n \in \nats$ 
with 
$n < \frac{\vc-1}{32 e \epsilon} \ln\!\left( \frac{1}{\vc-1}\min\!\left\{ \left\lfloor \frac{\vc-1}{4\epsilon} \right\rfloor, \covc - 1 \right\} \right)$,
fix any proper learning algorithm $\alg'$, 
and let $\hat{h}$ be the classifier returned by running $\alg'$ 
on a conditionally 
i.i.d.\ (given $J^*$) training set $D_n$ of size $n$ 
(with each $(X,Y) \in D_n$ having $X \sim \Px$ and 
$Y=\target(X)$ given $J^*$).  
Let $\Z_{i_{\epsilon}} = \{ ((i_{\epsilon},j),-1) : j \in \{1,\ldots,i_{\epsilon}\} \setminus J^* \}$
and $\hat{n} = | D_n \cap \Z_{i_{\epsilon}} |$,
and note that we have $\E[ \hat{n} | J^* ] 
= \frac{4\epsilon}{\vc-1} (i_{\epsilon}-(\vc-1)) n$.
Thus, by a Chernoff bound and the law of total probability, 
with probability at least $1/2$, 
it holds that $\hat{n} \leq 1 + 2 e \E[ \hat{n} | J^* ] 
= 1 + \frac{8 e \epsilon}{\vc-1} (i_{\epsilon}-(\vc-1)) n$
\citep*[see][]{motwani2010randomized}.
Combining this with the constraint on $n$, 
on this event we have 
$\hat{n}
< 1 + \frac{i_{\epsilon}-(\vc-1)}{4}\ln\!\left(  \frac{1}{\vc-1} \min\!\left\{ \left\lfloor \frac{\vc-1}{4\epsilon} \right\rfloor, \covc - 1 \right\} \right) 
\leq \frac{i_{\epsilon}-(\vc-1)}{2} \ln\!\left( \frac{i_{\epsilon}-(\vc-1)}{\vc-1}\right)$ 
(using the fact that $\epsilon \leq 1/504$).
Furthermore, note that the samples in $D_n \cap \Z_{i_{\epsilon}}$ 
are conditionally i.i.d.\ ${\rm Uniform}(\Z_{i_{\epsilon}})$ 
given $\hat{n}$.
Also note that the assumptions that 
$\covc \geq 126 \vc$ 
and $\epsilon \leq 1/504$ imply 
$i_{\epsilon}-(\vc-1) \geq 126 (\vc-1)$, 
so that 
$1 + \sqrt{\frac{2}{\vc-1}} \leq \frac{1}{2}\ln \frac{i_{\epsilon}-(\vc-1)}{\vc-1}$.
Therefore, denoting by $\hat{n}_1$ the number of \emph{distinct} 
elements of $\Z_{i_{\epsilon}}$ present in $D_n$,
we have that, on the event that 
$\hat{n} < \frac{i_{\epsilon}-(\vc-1)}{2}\ln\!\left(\frac{i_{\epsilon}-(\vc-1)}{\vc-1} \right)$,
Lemma~\ref{lem:couponcollect} implies 
$\P( \hat{n}_1 < i_{\epsilon}-2(\vc-1) | \hat{n},J^* ) > \frac{1}{2}$.

Since $\alg'$ is a proper learning algorithm, 
it must be that $\hat{h} = h_{\hat{i},\hat{J}}$ for some 
$\hat{i} \in \{\vc-1,\ldots,\covc+\vc-2\}$ and 
$\hat{J} \subseteq \{1,\ldots,\hat{i}\}$ with $|\hat{J}|=\vc-1$.
If $\hat{i} \neq i_{\epsilon}$, then 
$\er(\hat{h}) = (i_{\epsilon} - (\vc-1)) \frac{4\epsilon}{\vc-1} 
> \epsilon$.
Otherwise, suppose $\hat{i} = i_{\epsilon}$.
Then 
$\er(\hat{h}) = | \hat{J} \setminus J^* | \frac{4\epsilon}{\vc-1}$.
Note that, conditioned on $D_n$, the variable $J^*$ has 
conditional distribution uniform on the subsets 
of the (size $i_{\epsilon}-\hat{n}_1$) set 
$\{ j \in \{1, \ldots, i_{\epsilon}\} : ((i_{\epsilon},j),-1) \notin D_n \}$ of size $\vc-1$.  In particular, on the event $\hat{i} = i_{\epsilon}$, 
we have $\E\Big[ |\hat{J} \setminus J^*| \Big| D_n, \hat{J}, \hat{i} \Big] 
\geq (\vc-1)\frac{i_{\epsilon} - \hat{n}_1 - (\vc-1)}{i_{\epsilon}-\hat{n}_1}$.
On the event that 
$\hat{n}_1 < i_{\epsilon}-2(\vc-1)$, 
this implies 
$\E\Big[ |\hat{J} \setminus J^*| \Big| D_n, \hat{J}, \hat{i} \Big] 
> \frac{\vc-1}{2}$.
Therefore, a Chernoff bound (for sampling without replacement; 
see \citealp*{hoeffding:63}) implies that, 
on the events that $\hat{n}_1 < i_{\epsilon}-2(\vc-1)$ 
and $\hat{i} = i_{\epsilon}$, 
we have 
$\P\!\left( |\hat{J} \setminus J^*| 
\leq \frac{\vc-1}{4} \middle| D_n, \hat{J}, \hat{i} \right) \leq \exp\!\left\{ - \frac{\vc-1}{16} \right\} 
\leq e^{-1/16} < 1-\frac{1}{17}$.
In particular, note that 
if $\hat{i} = i_{\epsilon}$ and 
$|\hat{J} \setminus J^*| > \frac{\vc-1}{4}$, 
then 
$\er(\hat{h}) > \epsilon$.
Altogether, we have that 
\begin{align*}
\P( \er(\hat{h}) > \epsilon ) 
& \geq \P( \hat{i} \neq i_{\epsilon} ) + 
\E\!\left[ \P\Big( |\hat{J} \setminus J^*| > \frac{\vc-1}{4} \Big| D_n, \hat{J}, \hat{i} \Big) \ind[ \hat{n}_1 < i_{\epsilon} - 2(\vc\!-\!1) ] \ind[ \hat{i} = i_{\epsilon} ] \right] 
\\ & \geq 
\frac{1}{17} \E\!\left[ \P\Big( \hat{n}_1 < i_{\epsilon} - 2(\vc-1) \Big| \hat{n}, J^* \Big) \ind\!\left[ \hat{n} < \frac{i_{\epsilon}-(\vc-1)}{2}\ln\!\left(\frac{i_{\epsilon}-(\vc-1)}{\vc-1} \right) \right] \right]
\\ & \geq \frac{1}{34} \P\!\left( \hat{n} < \frac{i_{\epsilon}-(\vc-1)}{2}\ln\!\left(\frac{i_{\epsilon}-(\vc-1)}{\vc-1} \right) \right) 
\geq \frac{1}{68} > \delta.
\end{align*}
In particular, this implies that there 
exists a non-random choice of $\target \in \H$
and $\Px$ such that, with probability 
strictly greater than $\delta$, 
it holds that $\er(\hat{h}) > \epsilon$.  The claimed 
lower bound on $\SC_{{\rm prop}}(\epsilon,\delta)$ 
follows by simplifying the expression of the 
constraint on $n$ above 
(which is lower-bounded by the 
expression in the theorem, for a sufficiently 
small choice of the numerical constant $c$).

For the final claim in the theorem, it is clear that we can extend the above construction 
to an infinite space by allowing all 
$i \in \nats$ with $i \geq \vc-1$, 
in which case $\covcmod = \covc = \infty$ 
(since there exist hollow star sets of 
unbounded sizes, following 
the same argument given above), 
and the $\vc \Log\!\left( \frac{\covc}{\vc} \land \frac{1}{\epsilon} \right)$ term simplifies to 
$\vc \Log\!\left( \frac{1}{\epsilon} \right)$.
\end{proof}

\section{Proof of Theorem \ref{thm:samplecomp}}
\label{app:samplecomp}

The essence of the proof of this result is in fact very simple,
relying only on one technical construction: a set system on the data indices.
Specifically, for any $m \in \nats$, 
consider a family $\mathcal{I}_{m}$ of subsets of $\{1,\ldots,m\}$
satisfying the following two properties, for some $T_m \in \{1,\ldots,m\}$:
\begin{itemize}
    \item[(i)] each $I\in\mathcal{I}_m$ has size $|I|\leq m - T_m$,
    \item[(ii)] for every $i_1,i_2,\ldots i_\ell \in \{1,\ldots,m\}$ there exists $I\in\mathcal{I}_m$ 
    such that $\{i_1,i_2,\ldots i_\ell\} \subseteq I$.
\end{itemize}

Let $(\kappa,\rho)$ be a stable compression 
scheme of size $\ell$.
Fix any distribution $\Px$, any $\target \in \H$, and any 
$\delta \in (0,1)$, and let  
$S = (X_{1:m},\target(X_{1:m}))$ be such 
that $X_{1:m} \sim \Px^m$.
Given any family $\mathcal{I}_m$ satisfying (i) and (ii), we will establish that with 
probability at least $1-\delta$, 
\begin{equation}
\label{eqn:abstract-stable-compression-bound}
\er(\rho(\kappa(S))) \leq \frac{1}{T_m} \left( \ln(|\mathcal{I}_m|) + \ln\!\left(\frac{1}{\delta}\right) \right).
\end{equation}

As a simple example of such a family $\mathcal{I}_m$ that yields Theorem~\ref{thm:samplecomp},
consider any partition of $\{1,\ldots,m\}$ into 
disjoint blocks $I_1,\ldots,I_{2\ell}$,
each of size either $\lceil m/(2\ell) \rceil$ or $\lfloor m/(2\ell) \rfloor$.
Then we can define  
\begin{equation}
\label{eqn:naive-cover-system}
\mathcal{I}_m = \left\{ \bigcup \{ I_j : j \in \mathcal{J} \} : \mathcal{J} \subseteq \{1,\ldots,2\ell\}, |\mathcal{J}|=\ell \right\}.
\end{equation}
This clearly satisfies the above properties, 
with $T_m = \ell \lfloor m/(2\ell) \rfloor$, 
and has size $|\mathcal{I}_m| = \binom{2\ell}{\ell} < 4^{\ell}$, 
and hence plugging this into \eqref{eqn:abstract-stable-compression-bound} 
yields the bound stated in Theorem~\ref{thm:samplecomp}.
We now finish the proof of Theorem~\ref{thm:samplecomp} 
by establishing the bound \eqref{eqn:abstract-stable-compression-bound}.

Fix any family $\mathcal{I}_m$ satisfying (i) and (ii) above.
For the set $S$ as introduced above, 
for any $I \subseteq \{1,\ldots,m\}$ define $S_I = \{(X_i,\target(X_i)) : i \in I\}$.
For any $I \in \mathcal{I}_m$, 
since $S_{(1:m) \setminus I}$ is independent of $S_{I}$, 
and property (i) implies $|S_{(1:m) \setminus I}| \geq T_m$, we have 
\begin{equation*}
\P\Big( \rho(\kappa(S_I)) \text{ is correct on } S_{(1:m) \setminus I} \text{ and } \er(\rho(\kappa(S_I))) > \epsilon \Big) 
\leq (1-\epsilon)^{T_m}.
\end{equation*}
However, by property (ii) 
there must exist at least one $I^* \in \mathcal{I}$ with 
$\kappa(S) \subseteq S_{I^*}$, 
which means $\rho(\kappa(S_{I^*})) = \rho(\kappa(S))$ (by the stability property).
Thus, since $\rho(\kappa(S))$ is correct on 
all of $S$ (because $(\kappa,\rho)$ is a valid 
compression scheme), including 
$S_{(1:m) \setminus I^*}$, 
we have for this (data-dependent) 
choice of $I^*$ that  
$\rho(\kappa(S_{I^*}))$ 
is correct on $S_{(1:m) \setminus I^*}$.
Therefore, by basic inequalities and a union bound, 
\begin{align}
\P\!\left( \er(\rho(\kappa(S))) > \epsilon \right) 
& = \P\!\left( \er(\rho(\kappa(S_{I^*}))) > \epsilon \right) \notag 
\\ & \leq \P\!\left( \exists I \in \mathcal{I}_{m} : 
\rho(\kappa(S_I)) \text{ is correct on } S_{(1:m) \setminus I} \text{ and } \er(\rho(\kappa(S_I))) > \epsilon \right) \notag 
\\ & \leq |\mathcal{I}_{m}| (1-\epsilon)^{T_m}
\leq |\mathcal{I}_{m}| e^{-\epsilon T_m }. \label{eqn:comp-bound}
\end{align}
In particular, for any $\delta \in (0,1)$, choosing $\epsilon$ equal the expression 
on the right hand side of \eqref{eqn:abstract-stable-compression-bound}
makes the rightmost expression in \eqref{eqn:comp-bound} equal $\delta$, 
which therefore completes the proof of the abstract bound \eqref{eqn:abstract-stable-compression-bound}.
The bound in Theorem~\ref{thm:samplecomp} follows by plugging in the 
family $\mathcal{I}_m$ from \eqref{eqn:naive-cover-system}, 
which has $|\mathcal{I}_m| < 4^\ell$ and $T_m = \ell \lfloor m/(2\ell) \rfloor > (m-2\ell)/2$.
\hfill $\BlackBox$

\end{document}